\documentclass{article}

% if you need to pass options to natbib, use, e.g.:
%     \PassOptionsToPackage{numbers, compress}{natbib}
% before loading neurips_2025

% ready for submission
% \usepackage{neurips_2025}

% to compile a preprint version, e.g., for submission to arXiv, add add the
% [preprint] option:
% \usepackage[preprint]{neurips_2025}

% to compile a camera-ready version, add the [final] option, e.g.:
\usepackage[final]{neurips_2025}

% to avoid loading the natbib package, add option nonatbib:
%    \usepackage[nonatbib]{neurips_2025}

\usepackage[utf8]{inputenc} % allow utf-8 input
\usepackage[T1]{fontenc}    % use 8-bit T1 fonts
\usepackage{hyperref}       % hyperlinks
\usepackage{url}            % simple URL typesetting
\usepackage{booktabs}       % professional-quality tables
\usepackage{amsfonts}       % blackboard math symbols
\usepackage{nicefrac}       % compact symbols for 1/2, etc.
\usepackage{microtype}      % microtypography
\usepackage{xcolor}         % colors
\usepackage{bm}
\usepackage{amsmath}
\usepackage{amsthm}
\usepackage{algorithm}
\usepackage{algorithmic}
\usepackage{multirow}
\usepackage{graphicx}
\usepackage{makecell}

\theoremstyle{plain}
\newtheorem{theorem}{Theorem}[section]
\newtheorem{proposition}[theorem]{Proposition}

\title{Gated Integration of Low-Rank Adaptation \\ for  Continual Learning  of Large Language Models}

% The \author macro works with any number of authors. There are two commands
% used to separate the names and addresses of multiple authors: \And and \AND.
%
% Using \And between authors leaves it to LaTeX to determine where to break the
% lines. Using \AND forces a line break at that point. So, if LaTeX puts 3 of 4
% authors names on the first line, and the last on the second line, try using
% \AND instead of \And before the third author name.

\author{%
Yan-Shuo Liang, Jia-Rui Chen and Wu-Jun Li\thanks{Wu-Jun Li is the corresponding author.} \\
National Key Laboratory for Novel Software Technology, \\
School of Computer Science, Nanjing University, P. R. China\\
\tt{\small \{liangys,jiaruichen2005\}@smail.nju.edu.cn},
\texttt{\small liwujun@nju.edu.cn} 
}

\begin{document}

\maketitle

\begin{abstract}
    Continual learning~(CL), which requires the model to learn multiple tasks sequentially, is crucial for large language models~(LLMs). Recently, low-rank adaptation~(LoRA), one of the most representative parameter-efficient fine-tuning~(PEFT) methods, has gained increasing attention in CL of LLMs. However, most existing CL methods based on LoRA typically expand a new LoRA branch to learn each new task and force the new and old LoRA branches to influence old tasks equally, potentially leading to forgetting. In this work, we propose a new method, called \underline{ga}ted \underline{in}tegration of \underline{lo}w-\underline{r}ank \underline{a}daptation~(GainLoRA), for CL of LLMs. GainLoRA expands a new LoRA branch for each new task and introduces gating modules to integrate the new and old LoRA branches. Furthermore, GainLoRA leverages the new gating module to minimize the influence from the new LoRA branch to old tasks, effectively mitigating forgetting and improving the model's overall performance. Experimental results on CL benchmarks demonstrate that GainLoRA outperforms existing state-of-the-art methods. 
    Code is available at \url{https://github.com/liangyanshuo/gainlora}.
\end{abstract}

\section{Introduction}
Continual learning~(CL), which requires the model to learn multiple tasks sequentially, is crucial for large language models~(LLMs)~\cite{shi2024continual}. Specifically, although existing LLMs have demonstrated strong performance for a wide range of tasks~\cite{brown2020language,chowdhery2023palm,touvron2023llama,touvron2023llama2,zhang2022opt} with extensive pre-trained knowledge and further fine-tuning strategies, they may lose knowledge acquired from old tasks when learning multiple tasks sequentially. This phenomenon, known as catastrophic forgetting~\cite{luo2023empirical,parisi2019continual,DBLP:journals/pami/WangZSZ24,wang2023orthogonal}, highlights the need for developing effective CL methods for LLMs. Existing CL methods can be categorized into two main categories. The first category~\cite{DBLP:conf/iclr/RazdaibiedinaMH23} assumes that task identities are available during inference, while the second category~\cite{liang2024inflora,zhao2024sapt} tackles a more difficult and practical setting where task identities are unavailable during inference.

Recently, low-rank adaptation~(LoRA)~\cite{hu2021lora}, one of the most representative parameter-efficient fine-tuning~(PEFT) methods, has gained increasing attention in the CL of LLMs~\cite{bohaoscalable,wang2023orthogonal}. Specifically, by reparameterizing pre-trained weights in a low-rank form, LoRA updates only a limited number of parameters to adapt LLMs to a downstream task, making the fine-tuning process much more efficient than updating all parameters of LLMs~\cite{han2024parameter}. This efficiency also benefits CL, making LoRA increasingly popular in CL of LLMs.

Most existing CL methods based on LoRA~\cite{liang2024inflora,zhao2024sapt} typically expand a new LoRA branch for learning each new task while freezing all old LoRA branches. In this way, they avoid forgetting caused by directly updating the LoRA parameters of old tasks. However, to handle the practical CL scenario where task identities are unavailable at inference time, existing methods~\cite{liang2024inflora,smithcontinual,wang2023orthogonal} based on LoRA integrate new and old LoRA branches through a simple addition. Consequently, they force the new and old LoRA branches to influence old tasks equally, which means that the new LoRA branch may cause a relatively large change in the model's output on old tasks. This leads to forgetting and degrades the model's overall performance in CL.

In this work, we propose a new method, called \underline{ga}ted \underline{in}tegration of \underline{lo}w-\underline{r}ank \underline{a}daptation~(GainLoRA), for CL of LLMs. The contributions of GainLoRA are listed as follows:
\begin{itemize}
  \item GainLoRA expands a new LoRA branch to learn each new task and introduces gating modules to integrate the new and old LoRA branches. 
  \item GainLoRA leverages the new gating module to minimize the influence from the new LoRA branch to old tasks, effectively mitigating forgetting and improving the model's overall performance. 
  \item Experimental results on CL benchmarks show that GainLoRA outperforms existing state-of-the-art CL methods.
\end{itemize}

\section{Related Work and Preliminaries}
\subsection{Related Work}
% \paragraph{Parameter-Efficient Fine-Tuning}
\textbf{Parameter-Efficient Fine-Tuning}
Parameter-efficient fine-tuning~(PEFT) methods tune a limited number of parameters to adapt a pre-trained model for downstream tasks, showing much higher efficiency than tuning all the parameters of the pre-trained model, especially for LLMs~\cite{zhangadaptive}. For example, Adapter~\cite{houlsby2019parameter} modifies the model architecture by introducing trainable modules into Transformer layers and tunes these modules for downstream tasks. Prompt-tuning~\cite{DBLP:conf/emnlp/LesterAC21} and Prefix-tuning~\cite{DBLP:conf/acl/LiL20} insert learnable tokens into the input and tune them for downstream tasks. Low-rank adaptation~(LoRA)~\cite{hu2021lora} reparameterizes the original model parameters with low-rank matrices and tunes these matrices for downstream tasks. Although tuning significantly fewer parameters than full fine-tuning, PEFT can achieve comparable performance to full fine-tuning across a wide range of natural language processing~(NLP) tasks~\cite{fu2022adapterbias,hu2021lora,DBLP:conf/nips/MahabadiHR21,zaken2022bitfit}.

% \paragraph{Continual Learning}
\textbf{Continual Learning}
There are three main types of CL methods, categorized as regularization-based methods, memory-based methods, and expansion-based methods. Regularization-based methods~\cite{jung2020continual,kirkpatrick2017overcoming,li2017learning} incorporate a regularization term to mitigate catastrophic forgetting. Memory-based methods~\cite{DBLP:conf/nips/dAutumeRKY19,DBLP:conf/nips/LiangL23,lopez2017gradient,sun2019lamol,zhao2024sapt} utilize memory mechanisms to preserve knowledge from old tasks. Expansion-based methods~\cite{DBLP:conf/nips/Hung0WCCC19,li2019learn,liang2023adaptive,rusu2016progressive} mitigate catastrophic forgetting by introducing new parameters for learning new tasks while typically freezing old parameters.

Many CL methods~\cite{aranilearning,li2017learning,liang2023adaptive} are designed to train models from scratch. Recent studies~\cite{liang2024inflora,DBLP:conf/iclr/RazdaibiedinaMH23,wang2025self,wang2023orthogonal,wang2022learning} have shown that leveraging PEFT strategies to fine-tune pre-trained models enables CL methods to achieve superior performance across tasks. For example, some methods~\cite{DBLP:conf/iclr/QinJ22,DBLP:conf/iclr/RazdaibiedinaMH23,wang2022learning,zhao2024sapt} use prompt-tuning for continual learning. They either maintain independent prompts for each task or maintain a prompt pool and perform query-key matching to learn new tasks. There are also many methods~\cite{liang2024inflora,smith2023construct,wang2023orthogonal,zhao2024sapt} adopting LoRA for continual learning. Most of these methods expand a new LoRA branch to handle each new task while freezing old LoRA branches to mitigate catastrophic forgetting. However, they force the new and old LoRA branches to influence old tasks equally, potentially leading to forgetting.

% Among various PEFT methods, LoRA has gained particular attention due to its strong performance and efficiency~\cite{liang2024inflora,smith2023construct,wang2023orthogonal,zhao2024sapt}.

\subsection{Preliminaries}
% \paragraph{Problem Definition} 
\textbf{Problem Definition} 
We follow existing CL works~\cite{wang2023orthogonal,zhao2024sapt} to formalize the problem definition for CL of LLMs. Specifically, in CL, a sequence of tasks $\{\mathcal{T}_{1},\mathcal{T}_{2},...,\mathcal{T}_{T}\}$ is presented to the model sequentially, where $T$ denotes the total number of tasks. The $t$-th task $\mathcal{T}_{t}$ consists of a training dataset $\mathcal{D}_{t}$. For any given sample $(\bm{x}_{t},\bm{y}_{t})\in\mathcal{D}_{t}$, $\bm{x}_{t}$ denotes an input sentence and $\bm{y}_{t}$ denotes the corresponding output. When learning the $t$-th new task, the model is required to mitigate catastrophic forgetting of the $t-1$ previously learned tasks.

Similar to existing CL works for LLMs~\cite{bohaoscalable,zhao2024sapt}, we consider a more challenging CL setting with three key challenges: (1) the model is presented with a sequence of tasks spanning various types, such as dialogue generation, information extraction and so on; (2) the model is not provided with task identities at inference time; (3) the model must learn without access to real or synthetic samples from previously learned tasks.

% \paragraph{Low-Rank Adaptation}
\textbf{Low-Rank Adaptation}
LoRA~\cite{hu2021lora} is a widely adopted PEFT method used for fine-tuning various pre-trained models, particularly LLMs. Specifically, let $\bm{W} \in \mathbb{R}^{d_{out} \times d_{in}}$ represent a pre-trained weight in LLMs, where $d_{in}$ and $d_{out}$ are the input and output dimensions, respectively. LoRA introduces an additional branch consisting of two matrices, $\bm{A} \in \mathbb{R}^{d_{out} \times r}$ and $\bm{B} \in \mathbb{R}^{r \times d_{in}}$, where $r\ll{\rm min}(d_{in},d_{out})$. LoRA then modifies the forward propagation of this layer as 
% $\bm{e} = (\bm{W} + \bm{A}\bm{B})\bm{h}$.
\begin{align}
  \bm{e} = (\bm{W} + \bm{A}\bm{B})\bm{h}.\nonumber
\end{align}
Here, $\bm{h}$ and $\bm{e}$ denote the input and output, respectively. $\bm{A}$ is initialized to $\bm{0}$, and $\bm{B}$ is initialized with a Gaussian distribution. During fine-tuning for downstream tasks, the pre-trained weight $\bm{W}$ remains frozen, and only the parameters $\bm{A}$ and $\bm{B}$ are fine-tuned.

\begin{figure}[t]
    \begin{center}
    \centerline{\includegraphics[width=\columnwidth]{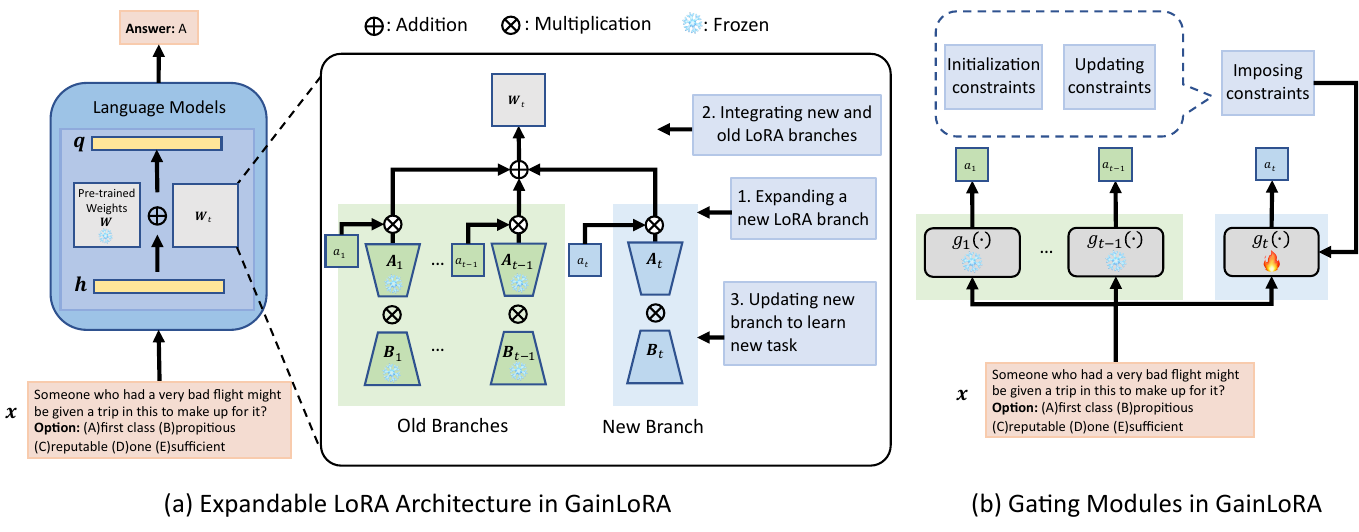}}
    % \vskip -0.1in
    \caption{(a) shows the expandable LoRA architecture of our GainLoRA for learning the $t$-th new task.~(b) shows that for each task $\mathcal{T}_{i}$, GainLoRA uses an independent gating module $g_{i}(\cdot)$ to generate integration coefficient $a_{i}$. 
    % Furthermore, during the learning of the $t$-th task, GainLoRA imposes constraints on the new gating module $g_{t}(\cdot)$.
    }
    \label{fig:mblora}
    \end{center}
    \vskip -0.2in
\end{figure}

\section{Methodology}\label{sec:methodology}
Our GainLoRA employs an expandable LoRA architecture, which is illustrated in Figure~\ref{fig:mblora}~(a). Specifically,  before learning the $t$-th task~($1\leq t\leq T$), GainLoRA first expands the LoRA architecture by introducing the $t$-th new branch with matrices $\bm{A}_{t}\in\mathbb{R}^{d_{out}\times r}$ and $\bm{B}_{t}\in\mathbb{R}^{r\times d_{in}}$. The new and old LoRA branches are then integrated as 
\begin{align}\label{eq:integration}
    \bm{W}_{t}=\bm{W}_{t-1} + a_{t}\bm{A}_{t}\bm{B}_{t} = \sum_{i=1}^{t}a_{i}\bm{A}_{i}\bm{B}_{i},
\end{align}
where $a_{i}$ is an integration coefficient that determines the influence of the $i$-th LoRA branch to the input $\bm{h}$. Note that $\bm{W}_{t-1}$ is a zero matrix when $t=1$. As a result, the forward propagation in this layer is modified as 
\begin{align}\label{eq:forward-multi}
    \bm{e} = (\bm{W} + \bm{W}_{t})\bm{h}.
\end{align}
% $\bm{e} = (\bm{W} + \bm{W}_{t})\bm{h}$.
Finally, only the new LoRA branch~(i.e. the $t$-th LoRA branch) is updated for the $t$-th new task, while all the old LoRA branches are frozen. After learning the $t$-th task,~(\ref{eq:forward-multi}) is also used for inference across all test samples, thereby ensuring compatibility with the scenario where task identities are unavailable during inference.

Many existing CL methods based on LoRA~\cite{liang2024inflora,smith2023construct,smithcontinual,wang2023orthogonal,zhao2024sapt} share a similar architecture to our method, as illustrated in Figure~\ref{fig:mblora}~(a). However, these methods fix all coefficients $\{a_i\}_{i=1}^{t}$ in~(\ref{eq:integration}) to $1$, forcing the new and old LoRA branches influence old tasks equally. As a result, the new LoRA branch introduces a change of $\bm{A}_{t}\bm{B}_{t}\bm{h}$ to the output for inputs $\bm{h}$ associated with old tasks, potentially leading to forgetting. Although some methods attempt to mitigate this forgetting by imposing regularization~\cite{smithcontinual} or orthogonality constraints~\cite{liang2024inflora} on the new LoRA branch, the fixed integration coefficients $\{a_{i}\}_{i=1}^{t}$ still limit their performance, as demonstrated by the experimental results presented in Section~\ref{sec:experiment}. The method in~\cite{zhao2024sapt} does not force the new and old LoRA branches to influence old tasks equally but relies on replaying synthetic old samples to mitigate forgetting, making it unsuitable for the scenario considered in this work. 

Different from existing methods, GainLoRA introduces an independent gating module $g_{i}(\cdot)$ for each task $\mathcal{T}_{i}$ to generate the integration coefficients~($1\leq i\leq T$). To mitigate the forgetting caused by the new task, GainLoRA leverages the gating module to minimize the influence from the new LoRA branch to the old tasks. The details will be introduced in the following subsections.

\subsection{Architecture of Gating Modules}
As illustrated in Figure~\ref{fig:mblora}~(b), given an input sample $\bm{x}$, the gating module $g_{i}(\cdot)$ generates the integration coefficient for the $i$-th LoRA branch, denoted as $a_{i}=g_{i}(\bm{x})$. The computation of $g_{i}(\cdot)$ is defined as
\begin{align}\label{eq:gate}
  \bm{p}_{i,0}&=\bm{p}_{0}={\rm Pool}({\rm Token}(\bm{x})), \nonumber \\
  \bm{p}_{i,l}&=\sigma(\bm{G}_{i,l}\bm{p}_{i,l-1}),~l\in\{1,2,...,L\}, \nonumber \\
  g_{i}(\bm{x})&=f(\bm{G}_{i,L+1}\bm{p}_{i,L}).
\end{align}
Here, ${\rm Token}(\cdot)$ represents the tokenizer used in LLMs to extract token embeddings from the input $\bm{x}$. ${\rm Pool}(\cdot)$ denotes an average pooling operation applied to the token embeddings to produce a fixed-size vector. $\sigma(\cdot)$ denotes the non-linear activation function. $\bm{G}_{i,l}$ denotes the weight matrix for the $l$-th layer of $g_{i}(\cdot)$~($1\leq l\leq L+1$). In the final layer, $\bm{G}_{i,L+1}$ is a vector that maps the input vector $\bm{p}_{i,L}$ to a scalar. Following existing works with gating mechanisms~\cite{cho2014properties,DBLP:journals/neco/HochreiterS97}, the function $f(\cdot)$ is designed to map a scalar to a value within $[0,1]$, that is, $f(\cdot):\mathbb{R}\rightarrow[0,1]$. 

Note that the input to gating modules is the same as that of LLMs, denoted as $\bm{x}$, which differs from the input to LoRA in a specific layer, denoted as $\bm{h}$. During the learning of the $t$-th new task, only the new gating module $g_{t}(\cdot)$ is updated, while all the old gating modules $\{g_{i}(\cdot)\}_{i=1}^{t-1}$ remain frozen.

\subsection{Minimizing the Influence from the New LoRA Branch to Old Tasks}
GainLoRA minimizes the influence from the new LoRA branch to old tasks by making $a_{t}=g_{t}(\bm{x})$ as close to $0$ as possible for any input $\bm{x}$ from old tasks $\{\mathcal{T}_{i}\}_{i=1}^{t-1}$. However, since we focus on the scenario where no real or synthetic samples from old tasks are accessible, directly optimizing $g_{t}(\bm{x})$ to $0$ is impractical. To overcome this challenge, \mbox{GainLoRA} imposes constraints on the new gating module $g_{t}(\cdot)$, implicitly guiding $g_{t}(\bm{x})$ to close to $0$ and reduce the influence of the new LoRA branch to old tasks.

In the following two subsections, we first describe the constraints imposed on the new gating module $g_{t}(\cdot)$ and explain how these constraints guide $g_{t}(\bm{x})$ close to $0$ for any $\bm{x}$ from the old tasks. Then, we detail the implementation of these constraints during training.
\subsubsection{Constraints on New Gating Module}\label{sec:impose-constraints}
To formalize the constraints imposed on the new gating module $g_{t}(\cdot)$, we define the subspace spanned by the inputs to $\bm{G}_{t,l}~(1\leq l\leq L+1)$ from the previous $t-1$ tasks as:
\begin{align}\label{eq:space}
    \mathcal{M}_{t,l}={\rm span}\{\bm{p}_{t,l-1}|~\bm{p}_{t,l-1}~\text{is}~&\text{defined in}~(\ref{eq:gate}), (\bm{x},\bm{y})\in\cup_{i=1}^{t-1}\mathcal{D}_{i}\}.
\end{align}
Note that subspaces $\{\mathcal{M}_{t,l}\}_{l=1}^{L+1}$ cannot be obtained directly due to the unavailability of samples from old tasks. However, by introducing additional constraints, $\{\mathcal{M}_{t,l}\}_{l=1}^{L+1}$ can be solved iteratively, which will be discussed in Section~\ref{sec:achieve-constraints}.
% \paragraph{Initialization Constraints}

\textbf{Initialization Constraints}
Before learning the $t$-th task, the following constraints are imposed on the initialization of the new gating module $g_{t}(\cdot)$:
\begin{align}\label{eq:orthogonal-constraints-init}
{\rm Init}(\bm{G}_{t,L+1})&\bot \mathcal{M}_{t,L+1},~f(0)=0,
\end{align}
where ${\rm Init}(\bm{G}_{t,L+1})$ denotes the initialization of $\bm{G}_{t,L+1}$. These constraints ensure that for any sample $\bm{x}$ from the old tasks, the integration coefficient satisfies 
\begin{align}
  a_{t}=g_{t}(\bm{x})=f({\rm Init}(\bm{G}_{t,L+1})\bm{p}_{t,L})=0,
\end{align} 
where $\bm{p}_{t,L}$ is defined in~(\ref{eq:gate}). The second equality holds since $\bm{G}_{t,L+1}={\rm Init}(\bm{G}_{t,L+1})$ before learning the $t$-th new task. The third equality holds because $f(0)=0$ and $\bm{p}_{t,L}\in\mathcal{M}_{t,L+1}$ for any $\bm{x}$ from previous $t-1$ tasks. 

% \paragraph{Updating Constraints}
\textbf{Updating Constraints}
During the learning of the $t$-th task, the following constraints are imposed on the updates to the new gating module $g_{t}(\cdot)$:
\begin{align}\label{eq:althernitive-objective}
\Delta\bm{G}_{t,l}\bot \mathcal{M}_{t,l}\quad \text{for} \quad 1\leq l\leq L+1,
\end{align}
where $\Delta\bm{G}_{t,l}$ denotes the update to $\bm{G}_{t,l}$. Based on existing studies~\cite{wang2021training,liang2023adaptive}, the constraints in~(\ref{eq:althernitive-objective}) ensure that $g_{t}(\bm{x})$ remains unchanged for inputs $\bm{x}$ from the old tasks during the learning of the $t$-th task. Formally, the following proposition holds:
\begin{proposition}\label{thm:althernitive-objective}
  If the constraints in~(\ref{eq:althernitive-objective}) are satisfied, subspaces $\{\mathcal{M}_{t,l}\}_{l=1}^{L+1}$ remain unchanged during the learning of the $t$-th task. Furthermore, for any input $\bm{x}$ from the previous $t-1$ tasks, $g_{t}(\bm{x})$ remains unchanged during the learning of the $t$-th task.
\end{proposition}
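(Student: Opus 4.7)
The plan is to prove both conclusions simultaneously by induction on the layer index $l$, showing that the intermediate activations $\bm{p}_{l-1}$ produced by $g_t$ on any input $\bm{x}$ from the previous $t-1$ tasks are invariant under the constrained updates. Once invariance of all $\bm{p}_{l-1}$ is established, the invariance of the subspaces $\mathcal{M}_{t,l}$ follows immediately from their definition in (\ref{eq:space}), and the invariance of $g_t(\bm{x})$ follows from its definition in (\ref{eq:gate}).

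First I would fix an arbitrary $\bm{x} \in \cup_{i=1}^{t-1}\mathcal{D}_i$ and interpret the orthogonality condition $\Delta\bm{G}_{t,l}\bot \mathcal{M}_{t,l}$ as meaning $\Delta\bm{G}_{t,l}\bm{v} = \bm{0}$ for every $\bm{v}\in\mathcal{M}_{t,l}$ (this is the interpretation used in the cited works on orthogonal gradient updates). The base case $l=1$ is trivial because $\bm{p}_0 = \mathrm{Pool}(\mathrm{Token}(\bm{x}))$ depends only on $\bm{x}$ and not on any $\bm{G}_{t,l}$. For the inductive step, I would assume $\bm{p}_{l-1}$ is unchanged during training of task $t$; then by the definition of $\mathcal{M}_{t,l}$ this vector lies in $\mathcal{M}_{t,l}$, so the constraint gives $\Delta\bm{G}_{t,l}\bm{p}_{l-1} = \bm{0}$, and hence
\begin{align*}
(\bm{G}_{t,l} + \Delta\bm{G}_{t,l})\bm{p}_{l-1} = \bm{G}_{t,l}\bm{p}_{l-1},
\end{align*}
which yields $\bm{p}_l = \sigma(\bm{G}_{t,l}\bm{p}_{l-1})$ unchanged.

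From the invariance of every $\bm{p}_{l-1}$ on inputs drawn from $\cup_{i=1}^{t-1}\mathcal{D}_i$, the span $\mathcal{M}_{t,l}$ is trivially unchanged during training of task $t$, which establishes the first claim. For the second claim, applying the same argument at the final layer gives $(\bm{G}_{t,L+1} + \Delta\bm{G}_{t,L+1})\bm{p}_L = \bm{G}_{t,L+1}\bm{p}_L$, so $g_t(\bm{x}) = f(\bm{G}_{t,L+1}\bm{p}_L)$ is unchanged.

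The only subtle point, and the step I would be most careful about, is circularity: the subspaces $\mathcal{M}_{t,l}$ are defined in terms of activations that themselves depend on the weights being updated, so one must verify that the constraints remain meaningful throughout training. The induction resolves this because it shows simultaneously that the activations stay fixed (so the $\mathcal{M}_{t,l}$ defined by pre-training activations coincide with those defined at any point in training), and that the orthogonality constraint therefore continues to guarantee $\Delta\bm{G}_{t,l}\bm{p}_{l-1}=\bm{0}$ at every step of the update trajectory. A clean way to formalize this is to view the argument as an invariant maintained at each gradient step rather than a one-shot computation.
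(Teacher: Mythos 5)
Your proposal is correct and follows essentially the same argument as the paper: an induction over layers showing that for inputs from the previous $t-1$ tasks each activation $\bm{p}_{l-1}$ lies in $\mathcal{M}_{t,l}$, so the orthogonal update $\Delta\bm{G}_{t,l}$ annihilates it, keeping the activations, the subspaces, and finally $g_{t}(\bm{x})$ unchanged. The remark on maintaining the invariant at every update step is a sensible clarification but does not alter the substance of the paper's proof.
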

The proof of this proposition is provided in Appendix~\ref{asec:proof}. Since the initialization constraints in~(\ref{eq:orthogonal-constraints-init}) ensure $g_{t}(\bm{x})=0$ before learning the $t$-th new task, $g_{t}(\bm{x})=0$ is preserved throughout the learning process if the updating constraints in~(\ref{eq:althernitive-objective}) are satisfied.

The fact that subspaces $\{\mathcal{M}_{t,l}\}_{l=1}^{L+1}$ remain unchanged, as stated in Proposition~\ref{thm:althernitive-objective}, is essential for implementing the orthogonal constraints in~(\ref{eq:althernitive-objective}). Specifically, as will be detailed in Section~\ref{sec:achieve-constraints}, orthonormal bases for the subspaces $\{\mathcal{M}_{t,l}\}_{l=1}^{L+1}$ are learned to enforce the orthogonal constraints in~(\ref{eq:orthogonal-constraints-init}) and~(\ref{eq:althernitive-objective}). Since the subspaces $\{\mathcal{M}_{t,l}\}_{l=1}^{L+1}$ remain unchanged during the learning of the $t$-th task, their orthonormal bases also remain unchanged, allowing them to be pre-computed before learning the $t$-th task, thus facilitating the implementation of orthogonal constraints in~(\ref{eq:orthogonal-constraints-init}) and~(\ref{eq:althernitive-objective}) throughout the learning process.

\subsubsection{Implementation of Constraints}\label{sec:achieve-constraints}
There exist many functions $f(\cdot):\mathbb{R}\rightarrow[0,1]$ satisfying $f(0)=0$. In this work, we define $f(\cdot)$ as 
\begin{align}\label{eq:fixed-init}
    f(b)=|2\cdot{\rm sigmoid}(b)-1|,
\end{align}
where ${\rm sigmoid}(\cdot)$ denotes the sigmoid function. Other functions $f(\cdot):\mathbb{R}\rightarrow[0,1]$ that satisfy $f(0)=0$ are also applicable, and experiments with different choices of $f(\cdot)$ are provided in Appendix~\ref{sec:vary-func}. Better model performance can be expected by designing more effective $f(\cdot)$, but this is not the focus of this paper.

Implementing the orthogonal constraints in~(\ref{eq:orthogonal-constraints-init}) and~(\ref{eq:althernitive-objective}) is challenging due to the lack of samples from previous $t-1$ tasks to approximate the subspaces $\{\mathcal{M}_{t,l}\}_{l=1}^{L+1}$. To address this issue, we further impose the following constraints on the initialization of $\bm{G}_{t,l}~(1\leq l\leq L)$:
\begin{align}\label{eq:copy-init}
  {\rm Init}(\bm{G}_{t,l})&\leftarrow \bm{G}_{t-1,l}.
\end{align}
This strategy initializes the first $L$ layers of $g_{t}(\cdot)$ using the corresponding layers from the previous gating module $g_{t-1}(\cdot)$. As a result, the first $L$ layers of $g_{t}(\cdot)$ can be viewed as being initialized and starting their training at the beginning of the first task, continuing until the $t$-th task. Simultaneously, the first $L$ layers in $g_{i}(\cdot)$ serve as checkpoints, preserving the state of $g_{t}(\cdot)$ after learning the $i$-th task ($1\leq i\leq t$). At this time, we can use existing method gradient projection memory~(GPM)~\cite{DBLP:conf/iclr/SahaG021} to iteratively learn a set of matrices $\{\bm{M}_{t,l}\}_{l=1}^{L+1}$, where the columns of $\bm{M}_{t,l}$ contribute to a set of orthonormal bases of subspace $\mathcal{M}_{t,l}$. Details of GPM are provided in Appendix~\ref{sec:gpm}. Then, before learning the $t$-th task, the following operation can be performed on ${\rm Init}(\bm{G}_{t,L+1})$:
\begin{align}\label{eq:orthogonal-projection-init}
    {\rm Init}(\bm{G}_{t,L+1})\leftarrow{\rm Init}(\bm{G}_{t,L+1}) - \bm{M}_{t,L+1}\bm{M}_{t,L+1}^{T}{\rm Init}(\bm{G}_{t,L+1}).
\end{align}
% According to existing works~\cite{liang2023adaptive,DBLP:conf/iclr/SahaG021,wang2021training}, ${\rm Init}(\bm{G}_{t,L+1})$ satisfies the constraints in~(\ref{eq:orthogonal-constraints-init}) after the operation in~(\ref{eq:orthogonal-projection-init}). Similarly, during the learning of the $t$-th task, the following operation can be performed on $\{\Delta\bm{G}_{t,l}\}_{l=1}^{L+1}$:
% \begin{align}\label{eq:orthogonal-projection-update}
%       \Delta\bm{G}_{t,l}&\leftarrow \Delta\bm{G}_{t,l} - \bm{M}_{t,l}\bm{M}_{t,l}^{T}\Delta\bm{G}_{t,l}.
% \end{align}
% After this, $\{\Delta\bm{G}_{t,l}\}_{l=1}^{L+1}$ satisfy the constraints in~(\ref{eq:althernitive-objective}).
At this time, we have
\begin{align}\label{eq:proof}
  &\bm{M}_{t,L+1}^{T}({\rm Init}(\bm{G}_{t,L+1}) - \bm{M}_{t,L+1}\bm{M}_{t,L+1}^{T}{\rm Init}(\bm{G}_{t,L+1}))\nonumber\\
  =&\bm{M}_{t,L+1}^{T}(\bm{I}-\bm{M}_{t,L+1}\bm{M}_{t,L+1}^{T}){\rm Init}(\bm{G}_{t,L+1})\nonumber\\
  =&(\bm{I}-\bm{M}_{t,L+1}^{T}\bm{M}_{t,L+1})\bm{M}_{t,L+1}^{T}{\rm Init}(\bm{G}_{t,L+1}).
\end{align}
Since the columns of $\bm{M}_{t,L+1}$ form an orthonormal basis, we have  $\bm{M}_{t,L+1}^{T}\bm{M}_{t,L+1}=\bm{I}$, which means $\bm{I}-\bm{M}_{t,L+1}^{T}\bm{M}_{t,L+1}=\bm{O}$. Therefore, the equation in~(\ref{eq:proof}) is equal to zero matrix $\bm{O}$. Note that $\mathcal{M}_{t,L+1}$ is spanned by the columns of $\bm{M}_{t,L+1}$, ${\rm Init}(\bm{G}_{t,L+1})$ satisfies the constraints in~(\ref{eq:orthogonal-constraints-init}) after the operation in~(\ref{eq:orthogonal-projection-init}).

% which means the projected initialization in~(\ref{eq:orthogonal-projection-init}) is orthogonal to the entire space $\mathcal{M}_{t,L+1}$, and hence the constraint~(\ref{eq:orthogonal-constraints-init}) is satisfied. 
% The third equation holds since Since ${\rm Init}(\bm{G}_{t,L+1}) - \bm{M}_{t,L+1}\bm{M}_{t,L+1}^{T}{\rm Init}(\bm{G}_{t,L+1})$ is orthogonal to $\bm{M}_{t,L+1}^{T}$ and 
% for any vector $\bm{m}\in\mathcal{M}_{t,L+1}$, since $\mathcal{M}_{t,L+1}$ is spanned by the columns of the orthonormal basis matrix $\bm{M}_{t,L+1}$, we have $\bm{m}=\bm{M}_{t,L+1}\bm{a}$ for some coefficient vector $\bm{a}$. Then, we have

Similarly, during the learning of the $t$-th task, the following operation can be performed on $\{\Delta\bm{G}_{t,l}\}_{l=1}^{L+1}$:
\begin{align}\label{eq:orthogonal-projection-update}
      \Delta\bm{G}_{t,l}&\leftarrow \Delta\bm{G}_{t,l} - \bm{M}_{t,l}\bm{M}_{t,l}^{T}\Delta\bm{G}_{t,l}.
\end{align}
With the same proving process in~(\ref{eq:proof}), we can show that the update in~(\ref{eq:orthogonal-projection-update}) allows the update $\{\Delta\bm{G}_{t,l}\}_{l=1}^{L+1}$ to satisfy the constraints in~(\ref{eq:althernitive-objective}).

\subsection{Updating the New LoRA Branch}
Our GainLoRA aims to effectively integrate new and old LoRA branches while mitigating forgetting caused by the new LoRA branch on old tasks. Since GainLoRA does not impose specific update strategies for the new LoRA branch, it is inherently compatible with various existing CL methods that adopt similar LoRA architecture as our method and can update the new LoRA branch~\cite{liang2024inflora,smithcontinual,wang2023orthogonal}. Since these existing methods fix all integration coefficients $\{a_{i}\}_{i=1}^{t}$ to 1, combining our method with these existing methods can enhance their performance, as demonstrated in Section~\ref{sec:experiment}.

\begin{algorithm}[tb]
    \caption{GainLoRA for Continual Learning}
    \label{alg:example}
 \begin{algorithmic}
    \STATE {\bfseries Input:} The data of different tasks $\{\mathcal{D}_{t}\}_{t=1}^{T}$.
    \STATE {\bfseries Output:} Learned LoRA parameters $\{(\bm{A}_{i}, \bm{B}_{i})\}_{i=1}^{T}$ and gating modules $\{g_{i}(\cdot)\}_{i=1}^{T}$.
    \FOR {$t$ in $1:T$}
    \STATE {Expand the $t$-th new LoRA branch with $\bm{A}_{t}$ and $\bm{B}_{t}$;}
    % \STATE {Randomly initialize gating module $g_{t}(\cdot)$;}
    % \IF {$t>1$}
    \STATE {Impose initialization constraints on the new gating module $g_{t}(\cdot)$ by~(\ref{eq:fixed-init}),~(\ref{eq:copy-init}) and~(\ref{eq:orthogonal-projection-init});}
    % \ENDIF
    \STATE {Integrate new and old LoRA branches by~(\ref{eq:integration});}
    \FOR {$\mathcal{B}_{t}\subseteq \mathcal{D}_{t}$}
    \STATE {Compute the loss in~(\ref{eq:new-loss}) and perform;}
    \STATE {Perform backward propagation to compute the update of the new LoRA branch and the new gating module;}
    \STATE {Impose updating constraints on the update of the new gating module by~(\ref{eq:althernitive-objective});}
    \ENDFOR
    % \STATE {Partition the input matrix and leverage GPM to update projection matrices;}
    \ENDFOR
 \end{algorithmic}
\end{algorithm}

\subsection{Whole Process of GainLoRA}
Algorithm~\ref{alg:example} outlines the whole process of our GainLoRA. Before learning the $t$-th new task $\mathcal{T}_{t}$, GainLoRA first expands the LoRA architecture by introducing the $t$-th new branch with matrices $\bm{A}_{t}$ and $\bm{B}_{t}$. Simultaneously, a new gating module $g_{t}(\cdot)$ is initialized through the operations specified in~(\ref{eq:fixed-init}),~(\ref{eq:orthogonal-projection-init}) and~(\ref{eq:copy-init}) to ensure that the initialization constraints in~(\ref{eq:orthogonal-constraints-init}) are satisfied. The new and old LoRA branches are then integrated using~(\ref{eq:integration}), and the forward propagation is modified as~(\ref{eq:forward-multi}).

During the learning of the $t$-th task $\mathcal{T}_{t}$ with the corresponding dataset $\mathcal{D}_{t}$, our method follows existing methods~\cite{wang2023orthogonal,zhao2024sapt} and computes the loss for the new task through
\begin{align}\label{eq:new-loss}
  \mathcal{L}_{t}=\frac{1}{|\mathcal{D}_{t}|}\sum_{(\bm{x}_{t},\bm{y}_{t})\in\mathcal{D}_{t}}\sum_{j=1}^{|\bm{y}_{t}|}{\rm log}\left[P(y_{t,j}|\bm{x}_{t},y_{t,1},...,y_{t,j-1})\right],
\end{align}
where $\bm{y}_{t}=[y_{t,1},y_{t,2},...,y_{t,|\bm{y}_{t}|}]$. Each time, GainLoRA samples a mini-batch $\mathcal{B}_{t}$ to minimize the loss in~(\ref{eq:new-loss}) by updating the new LoRA branch and the new gating module $g_{t}(\cdot)$. During this process, the projections defined in~(\ref{eq:orthogonal-projection-update}) are applied to the parameters of $g_{t}(\cdot)$, ensuring that the update constraints in~(\ref{eq:althernitive-objective}) are satisfied.

GainLoRA introduces a new gating module for each new task, which incurs additional parameters and computational cost when combined with other methods. Section~\ref{sec:experiment} will demonstrate that the trainable parameters added by GainLoRA are limited, making the number of trainable parameters in GainLoRA comparable to other methods. Additionally, Appendix~\ref{sec:com-cost} and~\ref{sec:com-svd-cost} will demonstrate that the computational cost introduced by GainLoRA is minimal compared to the original LLMs.

\section{Experiments}\label{sec:experiment}
\subsection{Experimental Settings}\label{sec:experimental settings}
% \paragraph{Datasets}
\textbf{Datasets}
Following existing CL methods~\cite{DBLP:conf/iclr/RazdaibiedinaMH23,wang2023orthogonal,zhao2024sapt}, we evaluate different methods on \mbox{SuperNI}~\cite{DBLP:conf/emnlp/WangMAKMNADASPK22} and Long Sequence~\cite{DBLP:conf/iclr/RazdaibiedinaMH23} benchmarks. SuperNI benchmark includes various types of NLP tasks, including dialogue generation, information extraction, question answering, summarization, and sentiment analysis. Following the protocols of existing method~\cite{zhao2024sapt}, three tasks are selected from each type, resulting in 15 tasks. These tasks are arranged into two different task sequences with different orders, referred to as Order~1 and Order~2. Long Sequence benchmark consists of 15 diverse classification tasks, which are similarly arranged into two task sequences with different orders, referred to as Order~3 and Order~4. More details about the benchmarks and task sequences are provided in Appendix~\ref{asec:experiment-detail}.

% \paragraph{Evaluation Metric}
\textbf{Evaluation Metric}
We use ${\rm A}_{j,i}$ to denote the model's performance on the $i$-th task once the model learns the $j$-th task. Specifically, ${\rm A}_{j,i}$ represents accuracy for classification tasks and Rouge-L~\cite{lin2004rouge} for other types of tasks. Following traditional CL works~\cite{chaudhry2020continual,chaudhry2019tiny}, we employ average performance~(AP) and forgetting~(FT) to evaluate the model's performance.
The formulas for these two metrics are defined as 
\begin{align}
  {\rm AP}=\frac{1}{T}\sum_{i=1}^{T}{\rm A}_{T,i},~~~
  {\rm FT}=\frac{1}{T-1}\sum_{i=1}^{T-1}({\rm max}_{l\in\{1,2,...,T-1\}}{\rm A}_{l,i}-{\rm A}_{T,i}),
\end{align}
where $T$ denotes the total number of tasks in the task sequence. AP evaluates the model's final performance, and FT quantifies the forgetting.
% The formulas for these two metrics are provided in Appendix~\ref{asec:metrics}.

% \paragraph{Baselines} 
\textbf{Baselines} 
We compare our method with state-of-the-art CL methods, including LFPT5~\cite{DBLP:conf/iclr/QinJ22}, EPI~\cite{wang2023rehearsal}, MIGU~\cite{du2024unlocking}, EWC~\cite{kirkpatrick2017overcoming}, TASL~\cite{feng2024tasl}, KIFLoRA~\cite{feng2024kif}, IncLoRA~\cite{wang2023orthogonal}, C-LoRA~\cite{smithcontinual}, O-LoRA~\cite{wang2023orthogonal}, and InfLoRA~\cite{liang2024inflora}. Additionally, we introduce a simple baseline called SeqLoRA, which does not expand new LoRA branches but sequentially updates old LoRA parameters for new tasks and lacks mechanism to mitigate forgetting.
Note that many CL methods based on pre-trained models in CV focus on classification tasks, relying either on carefully designed classifiers~\cite{mcdonnell2023ranpac,zhang2023slca,DBLP:conf/nips/ZhaoZYD024} or the [CLS] token in ViT~\cite{le2024mixture,smith2023coda,wang2023hierarchical,wang2024hide,wang2022s,wang2022dualprompt,wang2022learning}. In contrast, we follow existing works in NLP~\cite{wang2023orthogonal,zhao2024sapt} and adopt next-token prediction to handle both classification and generation tasks, where models~\cite{raffel2020exploring,touvron2023llama2} lack a [CLS] token. Consequently, these CV methods are incompatible with our setting and cannot be directly compared. For completeness, we adapt some of them to our setup and report results in Appendix~\ref{sec:cv-methods}.

% Note that many CL methods based on pre-trained models in CV focus on classification tasks and either rely on carefully designed classifiers~\cite{mcdonnell2023ranpac,zhang2023slca,DBLP:conf/nips/ZhaoZYD024} or [CLS] token in ViT~\cite{wang2022dualprompt,smith2023coda,wang2023hierarchical,le2024mixture,wang2022s,wang2022learning,wang2024hide}. However, we follows existing CL methods~\cite{wang2023orthogonal,zhao2024sapt} in NLP and adopts next-token prediction to handle both classification and generation tasks, where the models based on next-token prediction lack [CLS] token. As a result, those CV methods are not applicable to our experimental setup and cannot be directly compared. To be more convincing, we adapt some methods to fit our experimental setup and report some results in Appendix.

% \paragraph{Implementation Details}
\textbf{Implementation Details}
Following existing CL works~\cite{ouyang2022training,wang2023orthogonal,wei2021finetuned}, all methods are implemented with instruction tuning~\cite{ouyang2022training} and optimized using AdamW~\cite{DBLP:conf/iclr/LoshchilovH19}. To ensure fair comparisons, for all the methods based on LoRA, we follow existing CL methods~\cite{hu2021lora,wang2023orthogonal,zhao2024sapt} by incorporating the LoRA architecture into the query and value components of the multi-head attention mechanism in each Transformer block. Similar to the existing CL methods for LLMs~\cite{wang2023orthogonal,zhao2024sapt}, we use T5~\cite{raffel2020exploring}, Llama-2~\cite{touvron2023llama2} and Llama-3~\cite{dubey2024llama} as the base architectures. 
Each experiment is repeated three times with different seeds, and the average result is reported. 
More details, such as the learning rate, batch size, and architecture of the gating modules in GainLoRA, are provided in Appendix~\ref{sec:more-details} and Appendix~\ref{sec:more-details-gating}.

\begin{table*}[t]
    \caption{Results on different task sequences with T5-large model. Results of methods with $^{*}$ are copied from existing paper~\cite{zhao2024sapt}.
    }
    \centering
    % \scalebox{1.1}{
    \begin{tabular}{l|cccc|cccc}
    \toprule
    % &   \multicolumn{4}{c|}{SuperNI Benchmark} & \multicolumn{4}{c}{Long Sequence Benchmark} \\
    \multirow{2}*{{Method}} &   \multicolumn{2}{c}{Order 1} & \multicolumn{2}{c|}{Order 2} &   \multicolumn{2}{c}{Order 3} & \multicolumn{2}{c}{Order 4} \\
    & AP$\uparrow$ & FT$\downarrow$ & AP$\uparrow$ & FT$\downarrow$ & AP$\uparrow$ & FT$\downarrow$ & AP$\uparrow$ & FT$\downarrow$ \\
    \midrule
    % & ProgPrompt &  3.34 & 5.29 & -33.18 & 7.98 & 6.63 & -66.71\\
    % \rule{0pt}{8pt}EWC~\cite{kirkpatrick2017overcoming}  &   & 17.46  &  4.20 &  -28.61  & 45.45 & 3.73  & -25.93   \\
    % L2P$^{*}$~\cite{wang2022learning}         & 15.18 &  6.23 & 10.27 & 17.51 & 58.61 & 21.55 & 57.34 & 23.42 \\
    LFPT5$^{*}$~\cite{DBLP:conf/iclr/QinJ22}       & 39.03 & 10.87  & 29.70 & 20.72 & 66.62 & 14.57 & 67.40 & 13.20 \\
    EPI$^{*}$~\cite{wang2023rehearsal}         & -     & -      & -      & -     & 75.19 & 0.77  & 75.10 & 2.44  \\
    % ProgPrompt$^{*}$~\cite{DBLP:conf/iclr/RazdaibiedinaMH23}  & 2.83  & 33.27 & 3.85  & 33.09 & 6.14  & 69.53 & 9.83  & 63.89 \\
    MIGU+FT~\cite{du2024unlocking} & - & - & - & - & 71.30 & 11.39 & 69.05 & 14.06 \\
    EWC~\cite{kirkpatrick2017overcoming} & 15.32 & 26.78 & 18.19 & 30.28 & 43.24 & 23.66 & 46.25 & 32.90 \\
    TaSL~\cite{feng2024tasl} & 27.51 & 18.53 & 28.05 & 17.39 & 71.37 & 6.20 & 73.11 & 6.52 \\
    KIFLoRA~\cite{feng2024kif} & 28.33 & 16.44 & 30.31 & 16.27 & 72.19 & 3.10 & 73.72 & 4.75 \\
    SeqLoRA           & 7.30  & 47.60 & 7.03  & 47.97 & 49.46 & 27.60 & 33.81 & 45.53 \\
    IncLoRA~\cite{wang2023orthogonal} & 12.33 & 41.93 & 16.65 & 36.56 & 61.19 & 13.63 & 62.46 & 15.92 \\
    C-LoRA~\cite{smithcontinual}  & 22.69 & 24.25 & 32.81 & 11.60 & 66.83 & 8.64  & 61.86 & 14.18 \\
    % \cline{2-8}
    O-LoRA~\cite{wang2023orthogonal}  & 26.37 & 19.15 & 32.83 & 11.99 & 70.98 & 3.69  & 71.21 & 4.03 \\
    % \rowcolor{gray!20}
    GainLoRA~(O-LoRA) & \textbf{47.84} & \textbf{2.26}  & \textbf{46.84} & 2.91 & 73.37 & 3.02 & 76.01 & 2.49 \\
    InfLoRA~\cite{liang2024inflora} & 39.78 & 7.64  & 39.57 & 8.93  & 75.15 & 4.19  & 75.79 & 3.47  \\
    % \rowcolor{gray!20}
    GainLoRA~(InfLoRA)& 46.21 & 2.40  & 46.44 & \textbf{2.61}  & \textbf{78.01} & \textbf{0.77}  & \textbf{77.54} & \textbf{1.25}   \\
    % \midrule
    % \rule{0pt}{8pt}Replay  &  \multirow{4}*{yes}  & 35.37 & 2.35& -15.79& 75.28 & 3.28& -1.88 \\
    % \multirow{3}*{yes} &  SAPT-LoRA & 51.54 & 8.88 &\textbf{-0.57}& 82.02 & 9.86 &-1.25\\
    % % \rowcolor[gray]{0.9}
    % & I$^{2}$LoRA~(uncon) &  &  &  &  &  &  \\
    % % \rowcolor[gray]{0.9}
    % & I$^{2}$LoRA~(orth) &  &  &  &  &  &  \\
    %\midrule
    %\rule{0pt}{8pt} MTL & -  &  & - & -  & 76.3 & - & - \\
    \bottomrule
    \end{tabular}
    % }
    \label{tbl:t5}
    \vskip -0.1in
\end{table*}
  
\subsection{Experimental Results}
% \paragraph{Compare with Existing Methods}
\textbf{Compare with Existing Methods}
We first follow existing works~\cite{du2024unlocking,zhao2024sapt} and evaluate different CL methods using T5-Large. Since our method does not impose specific update strategies for the new LoRA branch, we adopt the same update strategies as the two state-of-the-art methods, \mbox{O-LoRA}~\cite{wang2023orthogonal} and InfLoRA~\cite{liang2024inflora}. Note that these two methods leverage LoRA architecture similar to our method but fix all integration coefficients $\{a_{i}\}_{i=1}^{T}$ to 1. Details of these two methods are provided in Appendix~\ref{sec:details-baselines}. We use GainLoRA~(O-LoRA) and GainLoRA~(InfLoRA) to respectively denote our methods adopting O-LoRA and InfLoRA to update the new LoRA branch. 
GainLoRA is also compatible with other methods that leverage expandable LoRA architecture shown in Figure~\ref{fig:mblora}~(a), and we give some results in Appendix~\ref{sec:combine}.

The results are shown in Table~\ref{tbl:t5}. As we can see, our methods GainLoRA~(O-LoRA) and GainLoRA~(InfLoRA) outperform O-LoRA and InfLoRA in both AP and FT, respectively. This improvement demonstrates that fixing all coefficients $\{a_{i}\}_{i=1}^{T}$ to 1 leads to forgetting on old tasks, thereby limiting the performance of O-LoRA and InfLoRA. By effectively mitigating this forgetting, GainLoRA~(O-LoRA) and GainLoRA~(InfLoRA) achieve superior performance. Furthermore, our methods consistently achieve the best performance across all task sequences.

Figure~\ref{fig:change} illustrates the variation in the average performance across all learned tasks for different methods throughout the CL process. As shown, GainLoRA consistently outperforms the performance of O-LoRA and InfLoRA throughout the whole training process.

\begin{figure*}[t]
  % \vskip 0.02in
  \begin{center}
 \centerline{\includegraphics[width=\textwidth]{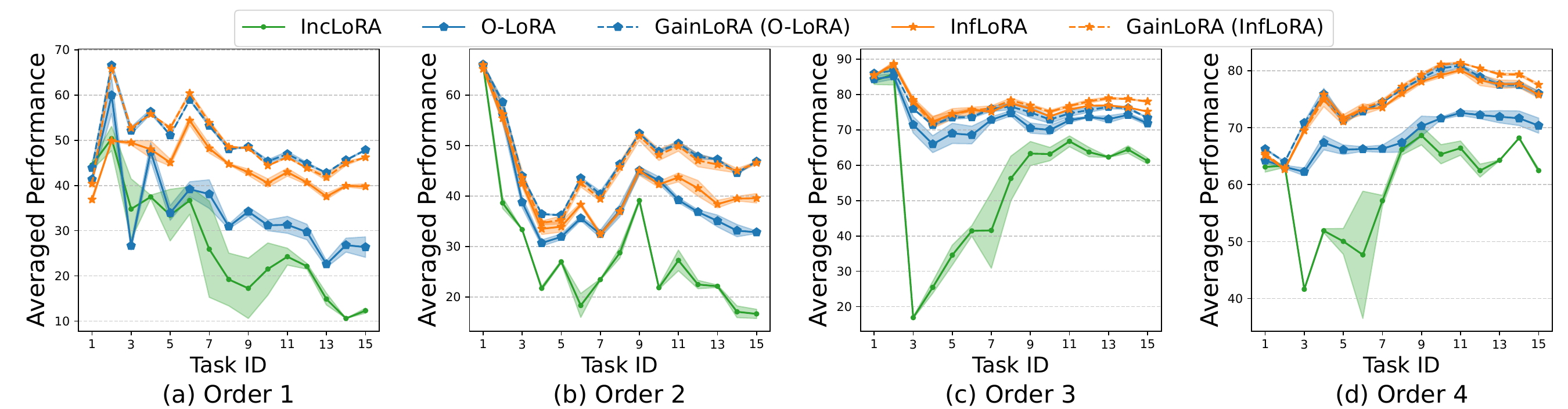}}
%  \vskip -0.1in
  \caption{The variation of performance across different CL methods during training on different task sequences.}
  \label{fig:change}
  \end{center}
  \vskip -0.2in
\end{figure*}

\begin{table*}[t]
\caption{The overall results on different task sequences with T5-XL model. 
}
\centering
% \scalebox{1.1}{
\begin{tabular}{l|cccc|cccc}
\toprule
% &   \multicolumn{4}{c|}{SuperNI Benchmark} & \multicolumn{4}{c}{Long Sequence Benchmark} \\
\multirow{2}*{{Method}} &   \multicolumn{2}{c}{Order 1} & \multicolumn{2}{c|}{Order 2} &   \multicolumn{2}{c}{Order 3} & \multicolumn{2}{c}{Order 4} \\
& AP$\uparrow$ & FT$\downarrow$ & AP$\uparrow$ & FT$\downarrow$ & AP$\uparrow$ & FT$\downarrow$ & AP$\uparrow$ & FT$\downarrow$ \\
\midrule
O-LoRA~\cite{wang2023orthogonal} & 36.50 & 11.42 & 40.64 & 6.37 & 73.77 & 2.70 & 76.19 & 3.56 \\
GainLoRA~(O-LoRA) & \textbf{50.10} & 3.21 & 49.86 & 3.04 & 78.41 & 2.59 & 77.21 & 3.30 \\
InfLoRA~\cite{liang2024inflora} & 45.61 & 5.60 & 45.85 & 5.10 & 80.22 & 2.09 & 79.43 & 1.71 \\
GainLoRA~(InfLoRA) & 50.06 & \textbf{1.86} & \textbf{50.26} & \textbf{2.64} & \textbf{81.22} & \textbf{0.58} & \textbf{80.30} & \textbf{0.75} \\
\bottomrule
\end{tabular}
% }
\label{tbl:t5-3b}
% \vskip -0.1in
\end{table*}

% \begin{table*}[t]
% \caption{The overall results on different task sequences with Llama-2-7B and Llama-2-13B.
% }
% \centering
% % \scalebox{1.1}{
% \begin{tabular}{l|cccc|cccc}
% \toprule
% &   \multicolumn{4}{c|}{Llama-2-7B} & \multicolumn{4}{c}{Llama-2-13B} \\
% \multirow{2}*{{Method}} &   \multicolumn{2}{c}{Order 1} & \multicolumn{2}{c|}{Order 2} &   \multicolumn{2}{c}{Order 1} & \multicolumn{2}{c}{Order 2} \\
% & AP$\uparrow$ & FT$\downarrow$ & AP$\uparrow$ & FT$\downarrow$ & AP$\uparrow$ & FT$\downarrow$ & AP$\uparrow$ & FT$\downarrow$ \\
% \midrule
% O-LoRA~\cite{wang2023orthogonal} & 39.37 & 15.84 & 37.55 & 20.23 & 43.92 & 14.15 & 40.05 & 19.53 \\
% GainLoRA~(O-LoRA) & 51.10 & 4.96 & \textbf{51.14} & 5.57 & 52.47 & 4.78 & 51.68 & 5.86 \\
% InfLoRA~\cite{liang2024inflora} & 42.93 & 11.23 & 39.94 & 15.00 & 43.64 & 14.85 & 45.74 & 10.61 \\
% GainLoRA~(InfLoRA) & \textbf{51.27} & \textbf{2.84} & 50.17 & \textbf{4.71} & \textbf{53.64} & \textbf{2.87} & \textbf{52.46} & \textbf{4.90} \\
% \bottomrule
% \end{tabular}
% % }
% \label{tbl:llama2}
% \vskip -0.1in
% \end{table*}

\begin{table*}[t]
\caption{The overall results on different task sequences with Llama-2-7B, Llama-2-13B and Llama-3-8B.
}
\centering
% \scalebox{1.1}{
\begin{tabular}{c|l|cccc}
\toprule
% &   \multicolumn{4}{c|}{Llama-2-7B} & \multicolumn{4}{c}{Llama-2-13B} \\
\multirow{2}*{{Models}} & \multirow{2}*{{Methods}} &   \multicolumn{2}{c}{Order 1} & \multicolumn{2}{c}{Order 2} \\
& & AP$\uparrow$ & FT$\downarrow$ & AP$\uparrow$ & FT$\downarrow$ \\
\midrule
\multirow{4}*{{Llama-2-7B}} & O-LoRA~\cite{wang2023orthogonal} & 39.37 & 15.84 & 37.55 & 20.23 \\
& GainLoRA~(O-LoRA) & 51.10 & 4.96 & \textbf{51.14} & 5.57 \\
& InfLoRA~\cite{liang2024inflora} & 42.93 & 11.23 & 39.94 & 15.00 \\
& GainLoRA~(InfLoRA) & \textbf{51.27} & \textbf{2.84} & 50.17 & \textbf{4.71} \\
\midrule
\multirow{4}*{{Llama-2-13B}} & O-LoRA~\cite{wang2023orthogonal} & 43.92 & 14.15 & 40.05 & 19.53 \\
& GainLoRA~(O-LoRA) & 52.47 & 4.78 & 51.68 & 5.86 \\
& InfLoRA~\cite{liang2024inflora} & 43.64 & 14.85 & 45.74 & 10.61 \\
& GainLoRA~(InfLoRA) & \textbf{53.64} & \textbf{2.87} & \textbf{52.46} & \textbf{4.90} \\
\midrule
\multirow{4}*{{Llama-3-8B}} & O-LoRA~\cite{wang2023orthogonal}  & 42.49 & 8.85 & 38.67 & 19.28 \\
& GainLoRA~(O-LoRA) & \textbf{53.39} & 3.56 & 51.69 & 6.20 \\
& InfLoRA~\cite{liang2024inflora} & 43.27 & 6.02 & 48.77 & 5.88 \\
& GainLoRA~(InfLoRA) & 52.18 & \textbf{1.40} & \textbf{52.48} & \textbf{4.21} \\
\bottomrule
\end{tabular}
% }
\label{tbl:llama2}
\vskip -0.1in
\end{table*}

% \paragraph{Scaling to Larger Model Architectures} 
\textbf{Scaling to Larger Model Architectures} 
To evaluate the effectiveness of our method on larger model architectures, we scale different LoRA-based CL methods to larger models, including T5-XL, Llama-2-7B, Llama-2-13B and Llama-3-8B. Table~\ref{tbl:t5-3b} and Table~\ref{tbl:llama2} present the results of different methods. As shown, across models of varying sizes, \mbox{GainLoRA~(O-LoRA)} and GainLoRA~(InfLoRA) consistently outperform O-LoRA and InfLoRA in terms of AP and FT, respectively. This demonstrates that GainLoRA effectively mitigates forgetting in the new LoRA branch across different model architectures.

\begin{figure}[h!]
  % \vskip 0.2in
  \begin{center}
  \centerline{\includegraphics[width=0.7\columnwidth]{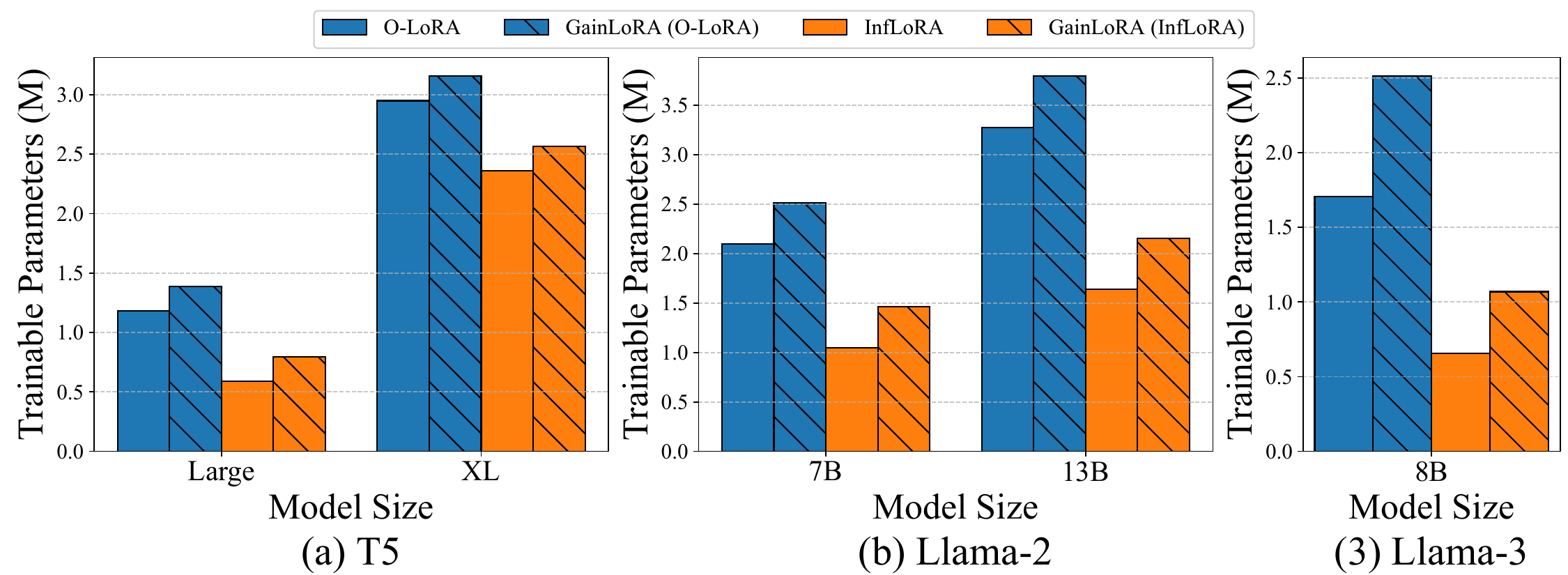}}
%  \vskip -0.1in
 \caption{(a),~(b) and~(c) show the number of trainable parameters for different CL methods and model backbones under task sequences Order 1 and Order 2.}
  \label{fig:trainable-params}
  \end{center}
  \vskip -0.2in
\end{figure}

% \paragraph{Trainable Parameters} 
\textbf{Trainable Parameters} 
We compare the number of trainable parameters across different methods for training on different task sequences. The results for T5, Llama-2 and Llama-3 are shown in Figure~\ref{fig:trainable-params}, and the detailed computation of trainable parameters is provided in Appendix~\ref{sec:computation}. As shown, GainLoRA~(O-LoRA) and GainLoRA~(InfLoRA) have more trainable parameters than O-LoRA and InfLoRA, respectively. This increase arises from the introduction of the trainable gating module in GainLoRA. However, the additional trainable parameters introduced by GainLoRA are much fewer than those in LoRA. Therefore, the total number of trainable parameters in \mbox{GainLoRA~(O-LoRA)} and GainLoRA~(InfLoRA) are comparable to that of \mbox{O-LoRA} and InfLoRA, respectively.

% \paragraph{Distribution of Outputs in New Gating Module}
\textbf{Distribution of Outputs in New Gating Module}
To demonstrate that our GainLoRA effectively minimizes the influence from the new LoRA branches to old tasks, we analyze the output distributions of the new gating modules. Specifically, after training on the final task~(i.e., the $15$-th task) in the task sequences, the 15-th task corresponds to the new task, and its associated gating module $g_{15}(\cdot)$ serves as the new gating module. 

We obtain the outputs of the new gating module $g_{15}(\cdot)$ on the samples from old and new tasks, respectively. Then, we analyze their distributions in Figure~\ref{fig:outputs}. As shown, the outputs of $g_{15}(\cdot)$ for the samples from old tasks are concentrated around 0, effectively minimizing the influence from the new LoRA branch to old tasks. Furthermore, GainLoRA does not constrain the outputs of $g_{15}(\cdot)$ for the samples from the new task. As a result, the outputs of $g_{15}(\cdot)$ for the samples from the new task are distributed near 1, enabling the model to effectively learn the new task.

\begin{figure}[t]
  % \vskip 0.2in
  \begin{center}
  \centerline{\includegraphics[width=\columnwidth]{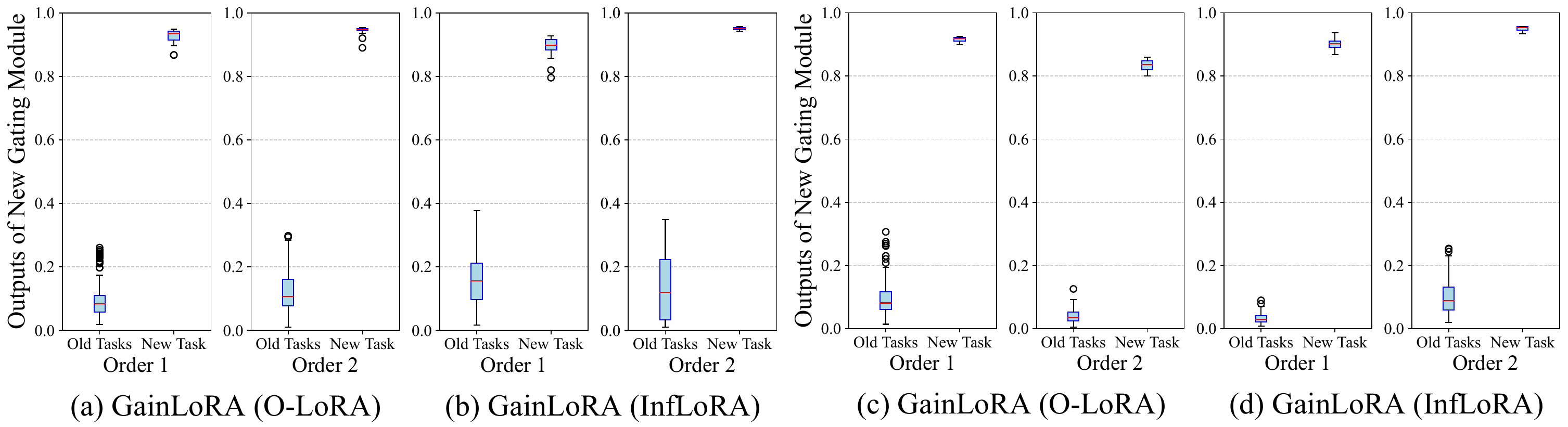}}
%  \vskip -0.1in
 \caption{(a) and~(b) show outputs of new gating module in our GainLoRA on different task sequences with T5-Large. (c) and~(d) show outputs of new gating module in our GainLoRA on different task sequences with Llama-2-7B.}
  \label{fig:outputs}
  \end{center}
  \vskip -0.2in
\end{figure}

  \begin{table*}[t]
    \caption{Ablation study of GainLoRA with T5-Large and Llama-2-7B.}
    \centering
    % \small
    % \scalebox{0.9}{
      \renewcommand\tabcolsep{5pt}
      \begin{tabular}{l|cc|cc|cc|cc}
    \toprule
    \multirow{3}*{{Method}} &   \multicolumn{4}{c|}{T5-Large} & \multicolumn{4}{c}{Llama-2-7B} \\
    &   \multicolumn{2}{c}{Order~1} & \multicolumn{2}{c|}{Order~2} & \multicolumn{2}{c}{Order~1} & \multicolumn{2}{c}{Order~2}  \\
    & AP$\uparrow$ & FT$\downarrow$ & AP$\uparrow$ & FT$\downarrow$ & AP$\uparrow$ & FT$\downarrow$ & AP$\uparrow$ & FT$\downarrow$ \\
    \midrule
    GainLoRA~(O-LoRA) & \textbf{47.84} & \textbf{2.26} & \textbf{46.84} & \textbf{2.91} & \textbf{51.10} & \textbf{4.96} & \textbf{51.14} & \textbf{5.57} \\
    No Initialization Constraints & 35.30 & 17.19 & 39.82 & 12.90 & 44.02 & 11.71 & 42.89 & 14.77 \\
    No Updating Constraints & 23.01 & 30.32 & 24.96 & 28.14 & 33.74 & 23.06 & 34.71 & 22.36 \\
    No Constraints & 26.32 & 26.00 & 30.63 & 22.37 & 34.48 & 23.46 & 36.87 & 21.24 \\
    \midrule
    GainLoRA~(InfLoRA)         & \textbf{46.21} & \textbf{2.40} & \textbf{46.44} & \textbf{2.61} & \textbf{51.27} & \textbf{2.84} & \textbf{50.17} & \textbf{4.71} \\
    No Initialization Constraints & 45.38 & 3.40 & 43.05 & 5.15 & 50.48 & 3.48 & 48.17 & 6.45 \\
    No Updating Constraints       & 37.69 & 10.94 & 38.85 & 9.31 & 48.52 & 5.68 & 47.85 & 7.00 \\
    No Constraints                & 36.75 & 12.18 & 41.00 & 6.66 & 49.10 & 6.07 & 45.77 & 8.70 \\
    \bottomrule
    \end{tabular}
  \label{tbl:vary-func}
  \vskip -0.1in
  \end{table*}
  
% \paragraph{Ablation Study}
\textbf{Ablation Study}
To verify the necessity of both the initialization and updating constraints introduced in Section~\ref{sec:impose-constraints}, we define several variants of GainLoRA. The first variant, referred to as ``No Initialization Constraints'', removes the initialization constraints defined in~(\ref{eq:orthogonal-constraints-init}). Specifically, it replaces $f(\cdot)$ defined in~(\ref{eq:fixed-init}) with function ${\rm sigmoid}(\cdot)$ and eliminates the operation in (\ref{eq:orthogonal-projection-init}) while keeping all other components unchanged. The second variant, referred to as ``No Updating Constraints'', removes the updating constraints defined in (\ref{eq:althernitive-objective}) by eliminating the operations in (\ref{eq:orthogonal-projection-update}) while preserving all other components of GainLoRA. The third variant, referred to as ``No Constraints'', follows ``No Initialization Constraints'' and ``No Updating Constraints'' to remove both the initialization and updating constraints. 
Table~\ref{tbl:vary-func} presents the experimental results of these variants. As shown, none of these variants perform as well as our GainLoRA, indicating the critical role of both the initialization constraints and updating constraints in our GainLoRA.
  
\section{Conclusion}\label{sec:conclusion and limitations}
% \textbf{Conclusion}
In this work, we propose a new method, called GainLoRA, for CL of language models. GainLoRA expands a new LoRA branch for each new task and introduces gating modules to integrate the new and old LoRA branches. Furthermore, GainLoRA leverages the new gating module to minimize the influence of the new LoRA branch to old tasks, effectively mitigating forgetting and improving the model's overall performance. Experimental results on CL benchmarks demonstrate that GainLoRA outperforms existing state-of-the-art methods.

\textbf{Limitations}
Similar to many CL methods for LLMs~\cite{feng2024tasl,wang2023orthogonal}, our method imposes some constraints on the model to mitigate forgetting. While effective, these constraints may accumulate with increasing tasks, potentially hindering the learning of new tasks. Furthermore, consistent with existing works~\cite{kirkpatrick2017overcoming,liang2024inflora,wang2023orthogonal}, our method primarily targets at catastrophic forgetting with non-overlapping tasks, and further investigation is needed to assess its effect on more complex scenarios, such as scenarios where there is overlap between tasks~\cite{moon2023online,bang2022online}.

\section{Broader Impacts}\label{sec:limitations and societal impacts}
Continual learning~(CL) offers a promising direction for improving the efficiency and scalability of language models, particularly in settings with continuously arriving tasks. By enabling incremental updates without retraining from scratch, it significantly reduces computational overhead and resource demands. 
However, CL often introduces additional components~(e.g. memory or gating mechanisms), increasing complexity and requiring effort for maintenance or deployment.

\section*{Acknowledgment}
This work is supported by NSFC~(No.62192783).

% \paragraph{Limitations}

% Similar to prior continual learning methods for language models~\cite{wang2023orthogonal,feng2024tasl}, our approach imposes constraints to mitigate forgetting. Specifically, we adopt orthogonal constraints, whereas other methods employ alternatives such as orthogonal~\cite{wang2023orthogonal} or mask-based constraints~\cite{feng2024tasl}. While effective, such constraints may accumulate with increasing tasks, potentially impeding the learning of new ones. Moreover, consistent with existing studies~\cite{liang2024inflora,wang2023orthogonal,kirkpatrick2017overcoming}, our method primarily targets catastrophic forgetting, and further investigation is needed to assess its effect on forward and backward knowledge transfer.

% \paragraph{Societal Impacts}
% \textbf{Broader Impacts}

% However, as real-world tasks may contain inherent biases, continual learning can accumulate these biases over time, potentially compromising model fairness.

% \section*{References}

% References follow the acknowledgments in the camera-ready paper. Use unnumbered first-level heading for
% the references. Any choice of citation style is acceptable as long as you are
% consistent. It is permissible to reduce the font size to \verb+small+ (9 point)
% when listing the references.
% Note that the Reference section does not count towards the page limit.

\bibliographystyle{plain}
\bibliography{neurips_2025}

% \medskip

% {
% \small

% [1] Alexander, J.A.\ \& Mozer, M.C.\ (1995) Template-based algorithms for
% connectionist rule extraction. In G.\ Tesauro, D.S.\ Touretzky and T.K.\ Leen
% (eds.), {\it Advances in Neural Information Processing Systems 7},
% pp.\ 609--616. Cambridge, MA: MIT Press.

% [2] Bower, J.M.\ \& Beeman, D.\ (1995) {\it The Book of GENESIS: Exploring
%   Realistic Neural Models with the GEneral NEural SImulation System.}  New York:
% TELOS/Springer--Verlag.

% [3] Hasselmo, M.E., Schnell, E.\ \& Barkai, E.\ (1995) Dynamics of learning and
% recall at excitatory recurrent synapses and cholinergic modulation in rat
% hippocampal region CA3. {\it Journal of Neuroscience} {\bf 15}(7):5249-5262.
% }

%%%%%%%%%%%%%%%%%%%%%%%%%%%%%%%%%%%%%%%%%%%%%%%%%%%%%%%%%%%%

\section*{NeurIPS Paper Checklist} 

%%% END INSTRUCTIONS %%%

\begin{enumerate}

\item {\bf Claims}
    \item[] Question: Do the main claims made in the abstract and introduction accurately reflect the paper's contributions and scope?
    \item[] Answer: \answerYes{} % Replace by \answerYes{}, \answerNo{}, or \answerNA{}.
    \item[] Justification: Yes, the main claims presented in the abstract and introduction accurately reflect the paper's core contributions and scope. Specifically, the paper clearly positions itself as an improvement over existing LoRA-based methods for CL of language models. The abstract and introduction appropriately emphasize the targeted application domain, namely natural language processing. This alignment ensures that the reader is given a faithful overview of the paper's goals, proposed methods, and experimental focus.
    \item[] Guidelines:
    \begin{itemize}
        \item The answer NA means that the abstract and introduction do not include the claims made in the paper.
        \item The abstract and/or introduction should clearly state the claims made, including the contributions made in the paper and important assumptions and limitations. A No or NA answer to this question will not be perceived well by the reviewers. 
        \item The claims made should match theoretical and experimental results, and reflect how much the results can be expected to generalize to other settings. 
        \item It is fine to include aspirational goals as motivation as long as it is clear that these goals are not attained by the paper. 
    \end{itemize}

\item {\bf Limitations}
    \item[] Question: Does the paper discuss the limitations of the work performed by the authors?
    \item[] Answer: \answerYes{} % Replace by \answerYes{}, \answerNo{}, or \answerNA{}.
    \item[] Justification: Section~\ref{sec:conclusion and limitations} discusses the limitations of the work performed by the authors.
    \item[] Guidelines:
    \begin{itemize}
        \item The answer NA means that the paper has no limitation while the answer No means that the paper has limitations, but those are not discussed in the paper. 
        \item The authors are encouraged to create a separate "Limitations" section in their paper.
        \item The paper should point out any strong assumptions and how robust the results are to violations of these assumptions (e.g., independence assumptions, noiseless settings, model well-specification, asymptotic approximations only holding locally). The authors should reflect on how these assumptions might be violated in practice and what the implications would be.
        \item The authors should reflect on the scope of the claims made, e.g., if the approach was only tested on a few datasets or with a few runs. In general, empirical results often depend on implicit assumptions, which should be articulated.
        \item The authors should reflect on the factors that influence the performance of the approach. For example, a facial recognition algorithm may perform poorly when image resolution is low or images are taken in low lighting. Or a speech-to-text system might not be used reliably to provide closed captions for online lectures because it fails to handle technical jargon.
        \item The authors should discuss the computational efficiency of the proposed algorithms and how they scale with dataset size.
        \item If applicable, the authors should discuss possible limitations of their approach to address problems of privacy and fairness.
        \item While the authors might fear that complete honesty about limitations might be used by reviewers as grounds for rejection, a worse outcome might be that reviewers discover limitations that aren't acknowledged in the paper. The authors should use their best judgment and recognize that individual actions in favor of transparency play an important role in developing norms that preserve the integrity of the community. Reviewers will be specifically instructed to not penalize honesty concerning limitations.
    \end{itemize}

\item {\bf Theory assumptions and proofs}
    \item[] Question: For each theoretical result, does the paper provide the full set of assumptions and a complete (and correct) proof?
    \item[] Answer: \answerYes{} % Replace by \answerYes{}, \answerNo{}, or \answerNA{}.
    \item[] Justification: Yes, the paper provides a complete and correct proof for the theoretical result stated in Proposition~\ref{thm:althernitive-objective}. All necessary assumptions are clearly stated, and the full proof is presented in Section~\ref{asec:proof} of the Appendix in the supplemental material. The logical steps are detailed and consistent with the claim, ensuring transparency and rigor in the theoretical justification.
    \item[] Guidelines:
    \begin{itemize}
        \item The answer NA means that the paper does not include theoretical results. 
        \item All the theorems, formulas, and proofs in the paper should be numbered and cross-referenced.
        \item All assumptions should be clearly stated or referenced in the statement of any theorems.
        \item The proofs can either appear in the main paper or the supplemental material, but if they appear in the supplemental material, the authors are encouraged to provide a short proof sketch to provide intuition. 
        \item Inversely, any informal proof provided in the core of the paper should be complemented by formal proofs provided in appendix or supplemental material.
        \item Theorems and Lemmas that the proof relies upon should be properly referenced. 
    \end{itemize}

    \item {\bf Experimental result reproducibility}
    \item[] Question: Does the paper fully disclose all the information needed to reproduce the main experimental results of the paper to the extent that it affects the main claims and/or conclusions of the paper (regardless of whether the code and data are provided or not)?
    \item[] Answer: \answerYes{} % Replace by \answerYes{}, \answerNo{}, or \answerNA{}.
    \item[] Justification: Section~\ref{sec:methodology} provides a complete description of our method, and Algorithm~\ref{alg:example} clearly outlines its step-by-step implementation. Furthermore, Section~\ref{sec:experimental settings} in the main text and Section~\ref{asec:experiment-detail} of the Appendix present all essential experimental settings, including datasets, evaluation protocols, hyperparameters. 
    \item[] Guidelines:
    \begin{itemize}
        \item The answer NA means that the paper does not include experiments.
        \item If the paper includes experiments, a No answer to this question will not be perceived well by the reviewers: Making the paper reproducible is important, regardless of whether the code and data are provided or not.
        \item If the contribution is a dataset and/or model, the authors should describe the steps taken to make their results reproducible or verifiable. 
        \item Depending on the contribution, reproducibility can be accomplished in various ways. For example, if the contribution is a novel architecture, describing the architecture fully might suffice, or if the contribution is a specific model and empirical evaluation, it may be necessary to either make it possible for others to replicate the model with the same dataset, or provide access to the model. In general. releasing code and data is often one good way to accomplish this, but reproducibility can also be provided via detailed instructions for how to replicate the results, access to a hosted model (e.g., in the case of a large language model), releasing of a model checkpoint, or other means that are appropriate to the research performed.
        \item While NeurIPS does not require releasing code, the conference does require all submissions to provide some reasonable avenue for reproducibility, which may depend on the nature of the contribution. For example
        \begin{enumerate}
            \item If the contribution is primarily a new algorithm, the paper should make it clear how to reproduce that algorithm.
            \item If the contribution is primarily a new model architecture, the paper should describe the architecture clearly and fully.
            \item If the contribution is a new model (e.g., a large language model), then there should either be a way to access this model for reproducing the results or a way to reproduce the model (e.g., with an open-source dataset or instructions for how to construct the dataset).
            \item We recognize that reproducibility may be tricky in some cases, in which case authors are welcome to describe the particular way they provide for reproducibility. In the case of closed-source models, it may be that access to the model is limited in some way (e.g., to registered users), but it should be possible for other researchers to have some path to reproducing or verifying the results.
        \end{enumerate}
    \end{itemize}

\item {\bf Open access to data and code}
    \item[] Question: Does the paper provide open access to the data and code, with sufficient instructions to faithfully reproduce the main experimental results, as described in supplemental material?
    \item[] Answer: \answerYes{} % Replace by \answerYes{}, \answerNo{}, or \answerNA{}.
    \item[] Justification: We are committed to ensuring reproducibility and transparency. We have released the full source code along with detailed instructions for reproducing all main experimental results.
    \item[] Guidelines:
    \begin{itemize}
        \item The answer NA means that paper does not include experiments requiring code.
        \item Please see the NeurIPS code and data submission guidelines (\url{https://nips.cc/public/guides/CodeSubmissionPolicy}) for more details.
        \item While we encourage the release of code and data, we understand that this might not be possible, so “No” is an acceptable answer. Papers cannot be rejected simply for not including code, unless this is central to the contribution (e.g., for a new open-source benchmark).
        \item The instructions should contain the exact command and environment needed to run to reproduce the results. See the NeurIPS code and data submission guidelines (\url{https://nips.cc/public/guides/CodeSubmissionPolicy}) for more details.
        \item The authors should provide instructions on data access and preparation, including how to access the raw data, preprocessed data, intermediate data, and generated data, etc.
        \item The authors should provide scripts to reproduce all experimental results for the new proposed method and baselines. If only a subset of experiments are reproducible, they should state which ones are omitted from the script and why.
        \item At submission time, to preserve anonymity, the authors should release anonymized versions (if applicable).
        \item Providing as much information as possible in supplemental material (appended to the paper) is recommended, but including URLs to data and code is permitted.
    \end{itemize}

\item {\bf Experimental setting/details}
    \item[] Question: Does the paper specify all the training and test details (e.g., data splits, hyperparameters, how they were chosen, type of optimizer, etc.) necessary to understand the results?
    \item[] Answer: \answerYes{} % Replace by \answerYes{}, \answerNo{}, or \answerNA{}.
    \item[] Justification: Section~\ref{sec:experimental settings} in the main text outlines the overall experimental setup, including datasets, evaluation metrics, baseline methods, and training protocols. In addition, Section~\ref{asec:experiment-detail} of the Appendix provides further details such as data splits, hyperparameter settings (e.g., learning rates, batch sizes), the choice and type of optimizer, and the rationale behind key parameter selections. Together, these sections ensure that the experiments can be accurately reproduced and fairly compared.
    \item[] Guidelines:
    \begin{itemize}
        \item The answer NA means that the paper does not include experiments.
        \item The experimental setting should be presented in the core of the paper to a level of detail that is necessary to appreciate the results and make sense of them.
        \item The full details can be provided either with the code, in appendix, or as supplemental material.
    \end{itemize}

\item {\bf Experiment statistical significance}
    \item[] Question: Does the paper report error bars suitably and correctly defined or other appropriate information about the statistical significance of the experiments?
    \item[] Answer: \answerYes{} % Replace by \answerYes{}, \answerNo{}, or \answerNA{}.
    \item[] Justification: While the main text follows the reporting conventions of existing works on LLMs~\cite{zhao2024sapt,feng2024tasl,wang2023orthogonal}, presenting averaged performance across methods and datasets, we provide more detailed statistical analysis in the Appendix. Specifically, Section~\ref{asec:more experimental results} of the supplemental material includes properly defined error bars (e.g., standard deviation across multiple runs), offering insight into the variability and robustness of the reported results.
    \item[] Guidelines:
    \begin{itemize}
        \item The answer NA means that the paper does not include experiments.
        \item The authors should answer "Yes" if the results are accompanied by error bars, confidence intervals, or statistical significance tests, at least for the experiments that support the main claims of the paper.
        \item The factors of variability that the error bars are capturing should be clearly stated (for example, train/test split, initialization, random drawing of some parameter, or overall run with given experimental conditions).
        \item The method for calculating the error bars should be explained (closed form formula, call to a library function, bootstrap, etc.)
        \item The assumptions made should be given (e.g., Normally distributed errors).
        \item It should be clear whether the error bar is the standard deviation or the standard error of the mean.
        \item It is OK to report 1-sigma error bars, but one should state it. The authors should preferably report a 2-sigma error bar than state that they have a 96\% CI, if the hypothesis of Normality of errors is not verified.
        \item For asymmetric distributions, the authors should be careful not to show in tables or figures symmetric error bars that would yield results that are out of range (e.g. negative error rates).
        \item If error bars are reported in tables or plots, The authors should explain in the text how they were calculated and reference the corresponding figures or tables in the text.
    \end{itemize}

\item {\bf Experiments compute resources}
    \item[] Question: For each experiment, does the paper provide sufficient information on the computer resources (type of compute workers, memory, time of execution) needed to reproduce the experiments?
    \item[] Answer: \answerYes{} % Replace by \answerYes{}, \answerNo{}, or \answerNA{}.
    \item[] Justification: Section~\ref{asec:experiment-detail} of the Appendix specifies the types of compute resources used, including CPU and GPUs. Additionally, Section~\ref{sec:computation} reports the extra computational overhead (in FLOPs) introduced by our method during training and inference.
    \item[] Guidelines:
    \begin{itemize}
        \item The answer NA means that the paper does not include experiments.
        \item The paper should indicate the type of compute workers CPU or GPU, internal cluster, or cloud provider, including relevant memory and storage.
        \item The paper should provide the amount of compute required for each of the individual experimental runs as well as estimate the total compute. 
        \item The paper should disclose whether the full research project required more compute than the experiments reported in the paper (e.g., preliminary or failed experiments that didn't make it into the paper). 
    \end{itemize}
    
\item {\bf Code of ethics}
    \item[] Question: Does the research conducted in the paper conform, in every respect, with the NeurIPS Code of Ethics \url{https://neurips.cc/public/EthicsGuidelines}?
    \item[] Answer: \answerYes{} % Replace by \answerYes{}, \answerNo{}, or \answerNA{}.
    \item[] Justification: Our research conducted in the paper conform, in every respect, with the NeurIPS Code of Ethics.
    \item[] Guidelines:
    \begin{itemize}
        \item The answer NA means that the authors have not reviewed the NeurIPS Code of Ethics.
        \item If the authors answer No, they should explain the special circumstances that require a deviation from the Code of Ethics.
        \item The authors should make sure to preserve anonymity (e.g., if there is a special consideration due to laws or regulations in their jurisdiction).
    \end{itemize}

\item {\bf Broader impacts}
    \item[] Question: Does the paper discuss both potential positive societal impacts and negative societal impacts of the work performed?
    \item[] Answer: \answerYes{} % Replace by \answerYes{}, \answerNo{}, or \answerNA{}.
    \item[] Justification: Section~\ref{sec:limitations and societal impacts} discusses both positive societal impacts and negative societal impacts of the work performed.
    \item[] Guidelines:
    \begin{itemize}
        \item The answer NA means that there is no societal impact of the work performed.
        \item If the authors answer NA or No, they should explain why their work has no societal impact or why the paper does not address societal impact.
        \item Examples of negative societal impacts include potential malicious or unintended uses (e.g., disinformation, generating fake profiles, surveillance), fairness considerations (e.g., deployment of technologies that could make decisions that unfairly impact specific groups), privacy considerations, and security considerations.
        \item The conference expects that many papers will be foundational research and not tied to particular applications, let alone deployments. However, if there is a direct path to any negative applications, the authors should point it out. For example, it is legitimate to point out that an improvement in the quality of generative models could be used to generate deepfakes for disinformation. On the other hand, it is not needed to point out that a generic algorithm for optimizing neural networks could enable people to train models that generate Deepfakes faster.
        \item The authors should consider possible harms that could arise when the technology is being used as intended and functioning correctly, harms that could arise when the technology is being used as intended but gives incorrect results, and harms following from (intentional or unintentional) misuse of the technology.
        \item If there are negative societal impacts, the authors could also discuss possible mitigation strategies (e.g., gated release of models, providing defenses in addition to attacks, mechanisms for monitoring misuse, mechanisms to monitor how a system learns from feedback over time, improving the efficiency and accessibility of ML).
    \end{itemize}
    
\item {\bf Safeguards}
    \item[] Question: Does the paper describe safeguards that have been put in place for responsible release of data or models that have a high risk for misuse (e.g., pretrained language models, image generators, or scraped datasets)?
    \item[] Answer: \answerNA{} % Replace by \answerYes{}, \answerNo{}, or \answerNA{}.
    \item[] Justification: 
    \item[] Guidelines:
    \begin{itemize}
        \item The answer NA means that the paper poses no such risks.
        \item Released models that have a high risk for misuse or dual-use should be released with necessary safeguards to allow for controlled use of the model, for example by requiring that users adhere to usage guidelines or restrictions to access the model or implementing safety filters. 
        \item Datasets that have been scraped from the Internet could pose safety risks. The authors should describe how they avoided releasing unsafe images.
        \item We recognize that providing effective safeguards is challenging, and many papers do not require this, but we encourage authors to take this into account and make a best faith effort.
    \end{itemize}

\item {\bf Licenses for existing assets}
    \item[] Question: Are the creators or original owners of assets (e.g., code, data, models), used in the paper, properly credited and are the license and terms of use explicitly mentioned and properly respected?
    \item[] Answer: \answerYes{} % Replace by \answerYes{}, \answerNo{}, or \answerNA{}.
    \item[] Justification: All external assets used in this paper, such as T5, Llama-2 and DeepSpeed, are properly credited. We ensure that all assets are used in accordance with their respective licenses and terms of use. Specific citations are provided in the paper.
    \item[] Guidelines:
    \begin{itemize}
        \item The answer NA means that the paper does not use existing assets.
        \item The authors should cite the original paper that produced the code package or dataset.
        \item The authors should state which version of the asset is used and, if possible, include a URL.
        \item The name of the license (e.g., CC-BY 4.0) should be included for each asset.
        \item For scraped data from a particular source (e.g., website), the copyright and terms of service of that source should be provided.
        \item If assets are released, the license, copyright information, and terms of use in the package should be provided. For popular datasets, \url{paperswithcode.com/datasets} has curated licenses for some datasets. Their licensing guide can help determine the license of a dataset.
        \item For existing datasets that are re-packaged, both the original license and the license of the derived asset (if it has changed) should be provided.
        \item If this information is not available online, the authors are encouraged to reach out to the asset's creators.
    \end{itemize}

\item {\bf New assets}
    \item[] Question: Are new assets introduced in the paper well documented and is the documentation provided alongside the assets?
    \item[] Answer: \answerNA{} % Replace by \answerYes{}, \answerNo{}, or \answerNA{}.
    \item[] Justification: 
    \item[] Guidelines:
    \begin{itemize}
        \item The answer NA means that the paper does not release new assets.
        \item Researchers should communicate the details of the dataset/code/model as part of their submissions via structured templates. This includes details about training, license, limitations, etc. 
        \item The paper should discuss whether and how consent was obtained from people whose asset is used.
        \item At submission time, remember to anonymize your assets (if applicable). You can either create an anonymized URL or include an anonymized zip file.
    \end{itemize}

\item {\bf Crowdsourcing and research with human subjects}
    \item[] Question: For crowdsourcing experiments and research with human subjects, does the paper include the full text of instructions given to participants and screenshots, if applicable, as well as details about compensation (if any)? 
    \item[] Answer: \answerNA{} % Replace by \answerYes{}, \answerNo{}, or \answerNA{}.
    \item[] Justification: 
    \item[] Guidelines:
    \begin{itemize}
        \item The answer NA means that the paper does not involve crowdsourcing nor research with human subjects.
        \item Including this information in the supplemental material is fine, but if the main contribution of the paper involves human subjects, then as much detail as possible should be included in the main paper. 
        \item According to the NeurIPS Code of Ethics, workers involved in data collection, curation, or other labor should be paid at least the minimum wage in the country of the data collector. 
    \end{itemize}

\item {\bf Institutional review board (IRB) approvals or equivalent for research with human subjects}
    \item[] Question: Does the paper describe potential risks incurred by study participants, whether such risks were disclosed to the subjects, and whether Institutional Review Board (IRB) approvals (or an equivalent approval/review based on the requirements of your country or institution) were obtained?
    \item[] Answer: \answerNA{} % Replace by \answerYes{}, \answerNo{}, or \answerNA{}.
    \item[] Justification: 
    \item[] Guidelines:
    \begin{itemize}
        \item The answer NA means that the paper does not involve crowdsourcing nor research with human subjects.
        \item Depending on the country in which research is conducted, IRB approval (or equivalent) may be required for any human subjects research. If you obtained IRB approval, you should clearly state this in the paper. 
        \item We recognize that the procedures for this may vary significantly between institutions and locations, and we expect authors to adhere to the NeurIPS Code of Ethics and the guidelines for their institution. 
        \item For initial submissions, do not include any information that would break anonymity (if applicable), such as the institution conducting the review.
    \end{itemize}

\item {\bf Declaration of LLM usage}
    \item[] Question: Does the paper describe the usage of LLMs if it is an important, original, or non-standard component of the core methods in this research? Note that if the LLM is used only for writing, editing, or formatting purposes and does not impact the core methodology, scientific rigorousness, or originality of the research, declaration is not required.
    %this research? 
    \item[] Answer: \answerNA{} % Replace by \answerYes{}, \answerNo{}, or \answerNA{}.
    \item[] Justification: 
    \item[] Guidelines:
    \begin{itemize}
        \item The answer NA means that the core method development in this research does not involve LLMs as any important, original, or non-standard components.
        \item Please refer to our LLM policy (\url{https://neurips.cc/Conferences/2025/LLM}) for what should or should not be described.
    \end{itemize}

\end{enumerate}

\newpage

\appendix

\section{More Details of Methods}
\subsection{Gradient Projection Memory}\label{sec:gpm}
We initialize the first $L$ layers of $g_{t}(\cdot)$ using the corresponding layers from the previous gating module $g_{t-1}(\cdot)$. Therefore, the first $L$ layers of $g_{t}(\cdot)$ can be viewed as being initialized at the beginning of the first task and continue their training until the $t$-th task. Additionally, the first $L$ layers in $g_{i}(\cdot)$ serve as checkpoints, preserving the state of $g_{t}(\cdot)$ after learning the $i$-th task ($1\leq i\leq t$). At this time, existing method gradient projection memory~(GPM)~\cite{DBLP:conf/iclr/SahaG021} can be used to learn matrices $\{\bm{M}_{t,l}\}_{l=1}^{L+1}$, where the columns of $\bm{M}_{t,l}$ approximate the orthonormal bases of the subspace $\mathcal{M}_{t,l}$. Specifically, when $t=1$, since there is no old task, ${\mathcal{M}}_{1,l}$ is a null space and $\bm{M}_{1,l}$ is a zero matrix. After learning the $t$-th new task, GPM expands ${\mathcal{M}}_{t,l}$ to ${\mathcal{M}}_{t+1,l}$ by first computing the input matrix $\bm{H}_{t,l}$ where each column of $\bm{H}_{t,l}$ represents an input to the $l$-th layer. Then, the component of $\bm{H}_{t,l}$ already in $\mathcal{M}_{t,l}$ is removed by 
\begin{align}\label{eq:projection}
    \widehat{\bm{H}}_{t,l}=\bm{H}_{t,l}-\bm{M}_{t,l}(\bm{M}_{t,l})^{T}\bm{H}_{t,l}.
\end{align}
Next, singular value decomposition (SVD) is performed on $\widehat{\bm{H}}_{t,l}\widehat{\bm{H}}_{t,l}^{T}$, which is decomposed as $\widehat{\bm{U}}_{t,l}\widehat{\bm{\Sigma}}_{t,l}\widehat{\bm{U}}_{t,l}^{T}$. Then, $u$ new orthonormal bases $\bm{u}_{1},...,\bm{u}_{u}$ are chosen from the columns of $\widehat{\bm{U}}_{t,l}$, where $u$ is the minimum number satisfying the following criteria for a given threshold $\epsilon_{th}$:
\begin{align}\label{eq:threshold in space}
    ||(\widehat{\bm{H}}_{t,l})_{u}||_{F}^{2}+||\bm{M}_{t,l}(\bm{M}_{t,l})^{T}\bm{H}_{t,l}||_{F}^{2}\geq \epsilon_{th}||\bm{H}_{t,l}||_{F}^{2}.
\end{align}
Here, $(\widehat{\bm{H}}_{t,l})_{u}$ denotes the components of $\widehat{\bm{H}}_{t,l}$ corresponding to the top-$u$ singular values. Then, the orthonormal bases of subspace ${\mathcal{M}}_{t+1,l}$ are obtained by augmenting the orthonormal bases of subspace $\mathcal{M}_{t,l}$ with the new orthogonal vectors $\bm{u}_{1},...,\bm{u}_{u}$, resulting in $\bm{M}_{t+1,l}=[\bm{M}_{t,l},\bm{u}_{1},...,\bm{u}_{u}]$.

\subsection{More Details of O-LoRA and InfLoRA}\label{sec:details-baselines}
\paragraph{O-LoRA}
O-LoRA~\cite{wang2023orthogonal} ensures that the new LoRA branch remains orthogonal to all the old LoRA branches. Specifically, during the learning of the $t$-th new task with the $t$-th LoRA branch ($\bm{A}_{t},\bm{B}_{t}$), O-LoRA computes the inner product between the new and old LoRA branches as
\begin{align}
  \bm{O}_{i,t}=\bm{B}_{i}^{T}\bm{B}_{t}~~~{\rm for}~1\leq i\leq t-1
\end{align}
Then, the loss function of O-LoRA is defined as
\begin{align}
  \frac{1}{|\mathcal{D}_{t}|}\sum_{(\bm{x}_{t},\bm{y}_{t})\in\mathcal{D}_{t}}\sum_{j=1}^{|\bm{y}_{t}|}\log\left[P(y_{t,j}|\bm{x}_{t},y_{t,1},...,y_{t,j-1})\right] + \lambda \sum_{i=1}^{t-1} \sum_{j,k} ||\bm{O}_{i,t}[j,k]||_{2}^{2}
\end{align}
For further details on O-LoRA, we refer readers to the original paper~\cite{wang2023orthogonal}.

\paragraph{InfLoRA}
InfLoRA~\cite{liang2024inflora} ensures orthogonality between the new LoRA branch and the gradients of old tasks. Specifically, it shows that only fine-tuning the down-projection matrix $\bm{A}_{t}$ in the new LoRA branch is equivalent to directly fine-tuning the pre-trained weights within a subspace spanned by the rows of $\bm{B}_{t}$. Therefore, before learning the $t$-th task, \mbox{InfLoRA} designs $\bm{B}_{t}$ to be orthogonal to the gradients of the old tasks. During the learning of the $t$-th task, InfLoRA only tunes $\bm{A}_{t}$ in the new LoRA branch while freezing $\bm{B}_{t}$ and all the old LoRA branches. For further details on InfLoRA, we refer readers to the original paper~\cite{liang2024inflora}.

\subsection{Proof of Proposition~\ref{thm:althernitive-objective}}\label{asec:proof}
% We use the following proposition to formalize the updating constraints we imposed to the gating module $g_{t}(\cdot)$ during the learning of the $t$-th task:
\begin{proposition}
  If the constraints in~(\ref{eq:althernitive-objective}) are satisfied, subspaces $\{\mathcal{M}_{t,l}\}_{l=1}^{L+1}$ remain unchanged during the learning of the $t$-th task. Furthermore, for any input $\bm{x}$ from the previous $t-1$ tasks, $g_{t}(\bm{x})$ remains unchanged during the learning of the $t$-th task.
\end{proposition}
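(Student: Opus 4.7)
The plan is to prove both claims simultaneously by induction on the layer index $l$, since the two statements are tightly coupled: to know that $\mathcal{M}_{t,l+1}$ is unchanged we need to know the vectors $\bm{p}_{l}$ are unchanged for old inputs, and this in turn follows from the orthogonality constraint at layer $l$ combined with $\mathcal{M}_{t,l}$ itself being unchanged. So I would phrase the inductive hypothesis as: for every old sample $\bm{x}\in\cup_{i=1}^{t-1}\mathcal{D}_{i}$, the vector $\bm{p}_{l-1}$ in~(\ref{eq:gate}) is identical before and after any gradient step on $g_{t}(\cdot)$, and consequently the subspace $\mathcal{M}_{t,l}$ is identical.

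The base case $l=1$ is immediate: $\bm{p}_{0}=\mathrm{Pool}(\mathrm{Token}(\bm{x}))$ depends only on the tokenizer and the input, not on any trainable parameter of $g_{t}(\cdot)$, so $\bm{p}_{0}$ and hence $\mathcal{M}_{t,1}=\mathrm{span}\{\bm{p}_{0}\}$ are unchanged. For the inductive step, the key algebraic observation is that if $\Delta\bm{G}_{t,l}\perp\mathcal{M}_{t,l}$ (interpreted column-wise, so every row of $\Delta\bm{G}_{t,l}$ is orthogonal to every vector in $\mathcal{M}_{t,l}$), then for any $\bm{p}_{l-1}\in\mathcal{M}_{t,l}$ we have $\Delta\bm{G}_{t,l}\,\bm{p}_{l-1}=\bm{0}$. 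Combined with the inductive hypothesis that $\bm{p}_{l-1}$ is unchanged, this gives $(\bm{G}_{t,l}+\Delta\bm{G}_{t,l})\bm{p}_{l-1}=\bm{G}_{t,l}\bm{p}_{l-1}$, so after applying $\sigma$ the vector $\bm{p}_{l}$ is also unchanged, and therefore $\mathcal{M}_{t,l+1}=\mathrm{span}\{\bm{p}_{l}\}$ is unchanged. Iterating a single gradient step gives invariance across the entire trajectory of updates during task $t$.

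The second claim then drops out of the induction at $l=L+1$: applying the same argument once more, the constraint $\Delta\bm{G}_{t,L+1}\perp\mathcal{M}_{t,L+1}$ together with $\bm{p}_{L}\in\mathcal{M}_{t,L+1}$ yields $\bm{G}_{t,L+1}\bm{p}_{L}$ unchanged, and then $g_{t}(\bm{x})=f(\bm{G}_{t,L+1}\bm{p}_{L})$ is unchanged since $f$ is a fixed function.

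The only subtle point I anticipate is book-keeping about what ``unchanged'' means over the course of many mini-batch updates rather than a single step: I would handle this by noting that the inductive argument shows each individual update preserves both $\mathcal{M}_{t,l}$ and every $\bm{p}_{l-1}$ for old $\bm{x}$, so a straightforward induction on the number of optimization steps (with the layer-wise induction nested inside) extends the invariant to the full training process. A second minor care-point is that the constraint $\Delta\bm{G}_{t,l}\perp\mathcal{M}_{t,l}$ must be read as: the update, viewed as a linear map, annihilates $\mathcal{M}_{t,l}$; I would state this convention explicitly at the start of the proof to avoid ambiguity, after which the rest of the argument is mechanical.
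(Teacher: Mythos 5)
Your proposal is correct and follows essentially the same route as the paper's proof: an induction over the layers showing that $\bm{p}_{l-1}$ (and hence $\mathcal{M}_{t,l}$) is invariant for old inputs because $\Delta\bm{G}_{t,l}\bot\mathcal{M}_{t,l}$ annihilates $\bm{p}_{l-1}\in\mathcal{M}_{t,l}$, concluding with $g_{t}(\bm{x})=f(\bm{G}_{t,L+1}\bm{p}_{L})$ unchanged. Your extra book-keeping about iterating over multiple optimization steps is a minor refinement the paper leaves implicit.
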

\begin{proof}
  For any $\bm{x}$ from previous $t-1$ tasks, we rewrite $\bm{g}_{t}(\bm{x})$ as 
  \begin{align}
    &g_{t}(\bm{x})=f(\bm{G}_{t,L+1}\bm{p}_{t,L}),\nonumber\\
    &\bm{p}_{t,l}=\sigma(\bm{G}_{t,l}\bm{p}_{t,l-1}),~l\in\{1,2,...,L\},\nonumber\\
    &\bm{p}_{t,0}=\bm{p}_{0}={\rm Pool}({\rm Token}(\bm{x})).
  \end{align}
  Since $\bm{p}_{t,0}={\rm Pool}({\rm Token}(\bm{x}))$ is unrelated to the parameters of the new gating module $g_{t}(\cdot)$, $\bm{p}_{t,0}$ does not change with the update of $g_{t}(\cdot)$. Since $\mathcal{M}_{t,1}$ is spanned by $\bm{p}_{t,0}$, $\mathcal{M}_{t,1}$ remains unchanged during the learning of the $t$-th task.

  Suppose that we have proven that $\bm{p}_{t,l-1}$ does not change with the update of the new gating module $g_{t}(\cdot)$~($1\leq l\leq L$). Since $\mathcal{M}_{t,l}$ is spanned by $\bm{p}_{t,l-1}$, $\mathcal{M}_{t,l}$ remains unchanged during the learning of the $t$-th task. At this point, $\bm{p}_{t,l}$ can be expressed as
  \begin{align}
    \bm{p}_{t,l}=\sigma(({\rm Init}(\bm{G}_{t,l})+\Delta\bm{G}_{t,l})\bm{p}_{t,l-1})=\sigma({\rm Init}(\bm{G}_{t,l})\bm{p}_{t,l-1}).
  \end{align}
  Here, the second equality holds since $\bm{p}_{t,l-1}\in\mathcal{M}_{t,l}$ and $\Delta\bm{G}_{t,l}\bot\mathcal{M}_{t,l}$. Therefore, $\bm{p}_{t,l}$ does not change with the update of the new gating module $g_{t}(\cdot)$~($1\leq l\leq L$). Since $\mathcal{M}_{t,l+1}$ is spanned by $\bm{p}_{t,l}$, $\mathcal{M}_{t,l+1}$ remains unchanged during the learning of the $t$-th task.

  Furthermore, during the learning of the $t$-th task, $g_{t}(\bm{x})$ can be expressed as
  \begin{align}
    \bm{g}_{t}(\bm{x})=f(({\rm Init}(\bm{G}_{t,L+1})+\Delta\bm{G}_{t,L+1})\bm{p}_{t,L})=f({\rm Init}(\bm{G}_{t,L+1})\bm{p}_{t,L}).
  \end{align}
  Here, the second equality holds since $\bm{p}_{t,L}\in\mathcal{M}_{t,L+1}$ and $\Delta\bm{G}_{t,L+1}\bot\mathcal{M}_{t,L+1}$.
\end{proof}

\section{More Details of Experimental Settings}\label{asec:experiment-detail}
% \subsection{Evaluation Metrics}\label{asec:metrics}
% We use ${\rm A}_{j,i}$ to denote the model's performance on the $i$-th task once the model learns the $j$-th task. Specifically, ${\rm A}_{j,i}$ represents accuracy for classification tasks and Rouge-L~\cite{lin2004rouge} for other types of tasks. Following traditional CL works~\cite{chaudhry2020continual,chaudhry2019tiny}, we employ average performance~(AP) and forgetting~(FT) to evaluate the model's performance.
% The formulas for these two metrics are defined as 
% \begin{align}
%   {\rm AP}=\frac{1}{T}\sum_{i=1}^{T}{\rm A}_{T,i},~~~
%   {\rm FT}=\frac{1}{T-1}\sum_{i=1}^{T-1}({\rm max}_{l\in\{1,2,...,T-1\}}{\rm A}_{l,i}-{\rm A}_{T,i}),
% \end{align}
% where $T$ denotes the total number of tasks in the task sequence. AP evaluates the model's final performance, and FT quantifies the forgetting.
\subsection{More Details of Datasets}
Table~\ref{tab:long-benchmark} and Table~\ref{tab:superni-benchmark} show the details of Long Sequence Benchmark and SuperNI Benchmark, respectively. Long Sequence Benchmark consists of 15 classification tasks while SuperNI Benchmark consists of various NLP tasks, including dialogue generation, information extraction, question answering, summarization, and sentiment analysis. 

\begin{table}[h]
    \renewcommand\arraystretch{1.2}
    \renewcommand\tabcolsep{3pt}
    % \small
    \setlength{\belowcaptionskip}{0.1cm}
    \caption{Details of different tasks in Long Benchmark.}
    \centering
    \begin{tabular}{l|llll}
      \toprule
       \textbf{Dataset name} & \textbf{Category} & \textbf{Domain} & \textbf{Task Type} & \textbf{Metric} \\
       \midrule
       Yelp & CL Benchmark & sentiment analysis & Yelp reviews & Accuracy \\
       Amazon & CL Benchmark & sentiment analysis & Amazon reviews & Accuracy \\
       DBpedia & CL Benchmark & topic classification & Wikipedia & Accuracy \\
       Yahoo & CL Benchmark & topic classification & Yahoo Q\&A & Accuracy \\
       AG News & CL Benchmark & topic classification & news & Accuracy \\
       MNLI & GLUE & natural language inference & various & Accuracy \\
       QQP & GLUE & paraphrase detection & Quora & Accuracy \\
       RTE & GLUE & natural language inference & news, Wikipedia & Accuracy \\
       SST-2 & GLUE & sentiment analysis & movie reviews & Accuracy \\
       WiC & SuperGLUE & word sense disambiguation & lexical databases & Accuracy \\
       CB & SuperGLUE & natural language inference & various & Accuracy \\
       COPA & SuperGLUE & question and answering & blogs, encyclopedia & Accuracy \\
       BoolQA & SuperGLUE & boolean question and answering & Wikipedia & Accuracy \\
       MultiRC & SuperGLUE & question and answering & various & Accuracy \\
       IMDB & SuperGLUE & sentiment analysis & movie reviews & Accuracy \\
       \bottomrule
      \end{tabular}
      \label{tab:long-benchmark}
\end{table}

\begin{table}[!h]
    \renewcommand\arraystretch{1.2}
    \renewcommand\tabcolsep{3pt}
    % \small
    \setlength{\belowcaptionskip}{0.1cm}
    \caption{Details of different tasks in SuperNI Benchmark.}
    \centering
    \begin{tabular}{l|llcl}
      \toprule
       \textbf{Dataset name} & \textbf{Task Type} & \textbf{Metric} \\
       \midrule
      %  1. Yelp & CL Benchmark & sentiment analysis & Yelp reviews & Accuracy \\
    %    1. Amazon & CL Benchmark & sentiment analysis & Amazon reviews & Accuracy \\
    %    2. DBpedia & CL Benchmark & topic classification & Wikipedia & Accuracy \\
    %    3. Yahoo & CL Benchmark & QA & Yahoo Q\&A & Accuracy \\
    %    4. AG News & CL Benchmark & topic classification & news & Accuracy \\
    %    \midrule
       Task639$\_$multi$\_$woz$\_$user$\_$utterance$\_$generation  & summarization & Rouge-L \\
       Task1590$\_$diplomacy$\_$text$\_$generation & summarization & Rouge-L \\
       Task1729$\_$personachat$\_$generate$\_$next & summarization & Rouge-L \\
       Task181$\_$outcome$\_$extraction  & information extraction & Rouge-L \\
       Task748$\_$glucose$\_$reverse$\_$cause$\_$event$\_$detection  & information extraction & Rouge-L \\
       Task1510$\_$evalution$\_$relation$\_$extraction & information extraction & Rouge-L \\
       Task002$\_$quoref$\_$answer$\_$generation  & dialogue generation & Rouge-L \\
       Task073$\_$commonsenseqa$\_$answer$\_$generation & dialogue generation & Rouge-L \\
       Task591$\_$sciq$\_$answer$\_$generation & dialogue generation & Rouge-L \\
       Task511$\_$reddit$\_$tifu$\_$long$\_$text$\_$summarization  & question answering & Rouge-L \\
       Task1290$\_$xsum$\_$summarization  & question answering & Rouge-L \\
       Task1572$\_$samsum$\_$summary & question answering & Rouge-L \\
       Task363$\_$sst2$\_$polarity$\_$classification  & sentiment analysis & Accuracy \\
       Task875$\_$emotion$\_$classification  & sentiment analysis & Accuracy \\
       Task1687$\_$sentiment140$\_$classification  & sentiment analysis & Accuracy \\
      %  7. QQP & GLUE & paraphrase detection & Quora & accuracy \& F1 \\
      %  8. RTE & GLUE & NLI & news, Wikipedia & accuracy \\
      %  9. SST2 & GLUE & sentiment analysis & movie reviews & accuracy \\
      %  10. WiC & SuperGLUE & word sense disambiguation & lexical databases & accuracy \\
      %  11. CB & SuperGLUE & NLI & various & accuracy \\
      %  12. COPA & SuperGLUE & QA & blogs, encyclopedia & accuracy \\
      %  13. BoolQ & SuperGLUE & boolean QA & Wikipedia & accuracy \\
      %  14. MultiRC & SuperGLUE & QA & various & F1 \& EM \\
      %  15. IMDB & Other & sentiment analysis & movie reviews & accuracy \\
       \bottomrule
      \end{tabular}
      \label{tab:superni-benchmark}
\end{table}

% \subsection{Task Sequences}
The task sequences are constructed using Long Sequence Benchmark and SuperNI Benchmark. The details of different task sequences are presented in Table~\ref{tab:different-order}. 

\begin{table}[h]
    \renewcommand\arraystretch{1.2}
    \renewcommand\tabcolsep{3pt}
    % \small
    \setlength{\belowcaptionskip}{0.1cm}
    \caption{The order of different task sequences for experiments.}
    \label{tab:different-order}
    \centering
    \begin{tabular}{lcc}
      \toprule
      Benchmark & Order & Task Sequence \\
      \midrule  
      \multirow{3}{*}{SuperNI Benchmark}& 1 & \makecell{task1572 $\rightarrow$ task363 $\rightarrow$ task1290 $\rightarrow$ task181 $\rightarrow$ task002 $\rightarrow$ \\ task1510 $\rightarrow$ task639 $\rightarrow$ task1729 $\rightarrow$ task073 $\rightarrow$ task1590 $\rightarrow$ \\ task748 $\rightarrow$ task511 $\rightarrow$ task591 $\rightarrow$ task1687 $\rightarrow$ task875} \\
      & 2 & \makecell{task748 $\rightarrow$ task073 $\rightarrow$ task1590 $\rightarrow$ task639 $\rightarrow$ task1572 $\rightarrow$ \\ task1687 $\rightarrow$ task591 $\rightarrow$ task363 $\rightarrow$ task1510 $\rightarrow$ task1729 $\rightarrow$ \\ task181 $\rightarrow$ task511 $\rightarrow$ task002 $\rightarrow$ task1290 $\rightarrow$ task875} \\
    %   & 3 & Yahoo $\rightarrow$ Amazon $\rightarrow$ AG News $\rightarrow$ DBpedia \\
      \midrule
      \multirow{3}{*}{CL Benchmark} 
      & 3 & \makecell{MNLI $\rightarrow$ CB $\rightarrow$ WiC $\rightarrow$ COPA $\rightarrow$ QQP $\rightarrow$ \\ BoolQA $\rightarrow$ RTE $\rightarrow$ IMDB $\rightarrow$ Yelp $\rightarrow$ Amazon $\rightarrow$ \\ SST-2 $\rightarrow$ DBpedia $\rightarrow$ AG News $\rightarrow$ MultiRC $\rightarrow$ Yahoo} \\
      & 4 & \makecell{Yelp $\rightarrow$ Amazon $\rightarrow$ MNLI $\rightarrow$ CB $\rightarrow$ COPA $\rightarrow$ \\ QQP $\rightarrow$ RTE $\rightarrow$ IMDB $\rightarrow$ SST-2 $\rightarrow$ DBpedia $\rightarrow$ \\ AG News $\rightarrow$ Yahoo $\rightarrow$ MultiRC $\rightarrow$ BoolQA $\rightarrow$ WiC} \\
      
      \bottomrule
    \end{tabular}
\end{table}

\subsection{More Implementation Details}\label{sec:more-details}
Following existing CL works~\cite{ouyang2022training,wang2023orthogonal,wei2021finetuned}, all methods are implemented using instruction tuning~\cite{ouyang2022training}. Experiments are conducted on 5 NVIDIA RTX A6000 GPUs with AdamW~\cite{DBLP:conf/iclr/LoshchilovH19} as the optimizer. The type of CPU is Intel(R) Xeon(R) Gold 6240R CPU @ 2.40GHz. For T5-Large and T5-XL, their relatively smaller model sizes allow experiments to be performed on a single A6000 GPU with gradient accumulation. For Llama-2-7B and Llama-2-13B, data parallelism with DeepSpeed ZeRO-2~\cite{rasley2020deepspeed} is prioritized across multiple A6000 GPUs. FlashAttention-2~\cite{daoflashattention} is employed to reduce memory usage during training, ensuring sufficient GPU memory to enable DeepSpeed ZeRO-2 whenever possible. However, if the sequence lengths of certain tasks are too long to enable DeepSpeed ZeRO-2 even with FlashAttention-2, DeepSpeed ZeRO-3 is utilized to handle these tasks.

To ensure fair comparisons, for all the methods based on LoRA, we follow existing CL methods~\cite{hu2021lora,wang2023orthogonal,zhao2024sapt} by integrating the LoRA architecture into the query and value components of the multi-head attention mechanism in each Transformer block. Following existing works~\cite{wang2023orthogonal,zhao2024sapt}, for all the methods based on LoRA, the rank of a single LoRA branch is set to 4 for Order~1 and Order~2, and 8 for Order~3 and Order~4. 
We also vary the rank in LoRA branches and show the results in Appendix~\ref{sec:vary-rank}.

For our methods, the global batch size is set to 32 across all model backbones. The learning rate is set to 3e-4 for T5 backbones and 5e-5 for Llama backbones. Each task is trained for 100 epochs with T5 backbones and 50 epochs with Llama backbones. For baselines, we follow their official implementations to set the hyperparameters, making the comparison as fair as possible. If this does not achieve the expected performance, we perform a hyperparameter search for the learning rate and batch size.

\subsection{More Details about the Architecture of the Gating Module}\label{sec:more-details-gating}
The architecture of the gating module $g_{i}(\cdot)$ can be represented as
\begin{align}
  &g_{i}(\bm{x})=f(\bm{G}_{i,L+1}\bm{p}_{i,L}),\nonumber\\
  &\bm{p}_{i,l}=\sigma(\bm{G}_{i,l}\bm{p}_{i,l-1}),~l\in\{1,2,...,L\},\nonumber\\
  &\bm{p}_{i,0}=\bm{p}_{0}={\rm Pool}({\rm Token}(\bm{x})).
\end{align}
Non-linear activation function $\sigma(\cdot)$ is set to SiLU~\cite{elfwing2018sigmoid}. For all experiments, unless otherwise stated, $L$ is set to 2. In other words, the gating module $g_{i}(\cdot)$ has three layers. For T5-Large and T5-XL, the parameters in the $i$-th gating module $g_{i}(\cdot)$ are $\bm{G}_{i,1}\in\mathbb{R}^{100\times d}$, $\bm{G}_{i,2}\in\mathbb{R}^{d\times 100}$ and $\bm{G}_{i,3}\in\mathbb{R}^{1\times d}$. For Llama-2-7B and Llama-2-13B, the parameters in the $i$-th gating module $g_{i}(\cdot)$ are $\bm{G}_{i,1}\in\mathbb{R}^{50\times d}$, $\bm{G}_{i,2}\in\mathbb{R}^{d\times 50}$ and $\bm{G}_{i,3}\in\mathbb{R}^{1\times d}$. Here, $d$ denotes the dimension of the embeddings. For different models, $d$ is 1024 for T5-Large and T5-XL, 4096 for Llama-2-7B, and 5120 for Llama-2-13B.

Additionally, we investigate the influence of the architecture of the gating module on the performance of our method. Results are provided in Appendix~\ref{sec:vary-gate}.
\subsection{Computation of Trainable Parameters}\label{sec:computation}
To ensure fair comparisons, we set the same rank for each LoRA branch across all CL methods based on the expandable LoRA architectures shown in Figure~\ref{fig:mblora}~(a). Additionally, for all the methods based on LoRA, the LoRA modules are incorporated into the query and value components of the multi-head attention mechanism within each Transformer block.

\subsubsection{Computation of Trainable Parameters in T5-Large}
In T5-Large, the projection weights for the query and value components have shapes $\bm{W}_{q},\bm{W}_{v}\in\mathbb{R}^{1024\times1024}$. The model consists of 24 self-attention modules in the encoder, 24 self-attention modules in the decoder, and 24 cross-attention modules in the decoder, resulting in a total of $(24 + 24 + 24)*2=144$ pre-trained weights that incorporate the LoRA architecture. 

During the learning of the $t$-th new task, O-LoRA updates the parameters $\bm{A}_{t}\in\mathbb{R}^{1024\times r}$ and $\bm{B}_{t}\in\mathbb{R}^{r\times 1024}$, resulting in $1024*r*144+r*1024*144=294912r$ trainable parameters. When $r=4$, the number of trainable parameters in O-LoRA is $294912*4=1179648=1.18$M. InfLoRA only updates the parameters $\bm{A}_{t}\in\mathbb{R}^{1024\times r}$, resulting in $1024*r*144=147456r$ trainable parameters. When $r=4$, the number of trainable parameters in InfLoRA is $147456r=589824=0.59$M.

GainLoRA introduces an additional new gating module $g_{t}(\cdot)$ with parameters $\bm{G}_{t,1}\in\mathbb{R}^{100\times 1024}$, $\bm{G}_{t,2}\in\mathbb{R}^{1024\times 100}$ and $\bm{G}_{t,3}\in\mathbb{R}^{1\times 1024}$. Therefore, the number of trainable parameters in GainLoRA~(O-LoRA) is $1179648+1024*100+1024*100+1024=1385472=1.39$M. The number of trainable parameters in GainLoRA~(InfLoRA) is $589824+1024*100+1024*100+1024=795648=0.80$M.

\subsubsection{Computation of Trainable Parameters in T5-XL}
In T5-XL, the projection weights for the query and value components have shapes $\bm{W}_{q},\bm{W}_{v}\in\mathbb{R}^{4096\times1024}$. The model architecture is similar to T5-Large, with 144 pre-trained weights incorporating LoRA.

During the learning of the $t$-th new task, O-LoRA updates the parameters $\bm{A}_{t}\in\mathbb{R}^{4096\times r}$ and $\bm{B}_{t}\in\mathbb{R}^{r\times 1024}$, resulting in is $4096*r*144+r*1024*144=737280r$ trainable parameters. When $r=4$, O-LoRA has $737280*4=2949120=2.95$M trainable parameters. InfLoRA only updates $\bm{A}_{t}\in\mathbb{R}^{4096\times r}$, resulting in $4096*r*144=589824r$ trainable parameters. When $r=4$, InfLoRA has $589824*4=2359296=2.36$M trainable parameters. 

GainLoRA introduces the same new gating module $g_{t}(\cdot)$ as in T5-Large, with parameters $\bm{G}_{t,1}\in\mathbb{R}^{100\times 1024}$, $\bm{G}_{t,2}\in\mathbb{R}^{1024\times 100}$ and $\bm{G}_{t,3}\in\mathbb{R}^{1\times 1024}$. Thus, the total number of trainable parameters for GainLoRA~(O-LoRA) is $2949120+1024*100+1024*100+1024=3154944=3.15$M. The total number of trainable parameters in GainLoRA~(InfLoRA) is $2359296+1024*100+1024*100+1024=2565120=2.57$M.

\subsubsection{Computation of Trainable Parameters in Llama-2-7B}
In Llama-2-7B, the projection weights for the query and value components have shapes $\bm{W}_{q},\bm{W}_{v}\in\mathbb{R}^{4096\times4096}$. The model contains 32 self-attention modules, resulting in $32*2=64$ pre-trained weights that incorporate the LoRA architecture. 

During the learning of the $t$-th new task, O-LoRA updates the parameters $\bm{A}_{t}\in\mathbb{R}^{4096\times r}$ and $\bm{B}_{t}\in\mathbb{R}^{r\times 4096}$, resulting in $4096*r*64+r*4096*64=524288r$ trainable parameters. When $r=4$, the number of trainable parameters in O-LoRA is $524288*4=2097152=2.10$M. InfLoRA only updates the parameters $\bm{A}_{t}\in\mathbb{R}^{4096\times r}$, resulting in $4096*r*64=262144r$ trainable parameters. When $r=4$, the number of trainable parameters in InfLoRA is $262144*4=1048576=1.05$M.

GainLoRA introduces a new gating module $g_{t}(\cdot)$ with parameters $\bm{G}_{t,1}\in\mathbb{R}^{50\times 4096}$, $\bm{G}_{t,2}\in\mathbb{R}^{4096\times 50}$ and $\bm{G}_{t,3}\in\mathbb{R}^{1\times 4096}$. Therefore, the number of trainable parameters in GainLoRA~(O-LoRA) is $2097152+4096*50+4096*50+4096=2510848=2.51$M. The number of trainable parameters in GainLoRA~(InfLoRA) is $1048576+4096*50+4096*50+4096=1462272=1.46$M.

\subsubsection{Computation of Trainable Parameters in Llama-2-13B}
In Llama-2-13B, the projection weights for the query and value components have shapes $\bm{W}_{q},\bm{W}_{v}\in\mathbb{R}^{5120\times5120}$. The model contains 40 self-attention modules, resulting in $40*2=80$ pre-trained weights that incorporate the LoRA architecture. 

During the learning of the $t$-th new task, O-LoRA updates the parameters $\bm{A}_{t}\in\mathbb{R}^{5120\times r}$ and $\bm{B}_{t}\in\mathbb{R}^{r\times 5120}$, resulting in $5120*r*80+r*5120*80=819200r$ trainable parameters. When $r=4$, the number of trainable parameters in O-LoRA is $819200*4=3276800=3.28$M. InfLoRA only updates the parameters $\bm{A}_{t}\in\mathbb{R}^{5120\times r}$, resulting in $5120*r*80=409600r$ trainable parameters. When $r=4$, the number of trainable parameters in InfLoRA is $409600*4=1638400=1.64$M.

GainLoRA introduces a new gating module $g_{t}(\cdot)$ with parameters $\bm{G}_{t,1}\in\mathbb{R}^{50\times 5120}$, $\bm{G}_{t,2}\in\mathbb{R}^{5120\times 50}$ and $\bm{G}_{t,3}\in\mathbb{R}^{1\times 5120}$. Therefore, the number of trainable parameters in GainLoRA~(O-LoRA) is $3276800+5120*50+5120*50+5120=3793920=3.79$M. The number of trainable parameters in GainLoRA~(InfLoRA) is $1638400+5120*50+5120*50+5120=2155520=2.16$M.

\subsubsection{Computation of Trainable Parameters in Llama-3-8B}
In Llama-3-8B, the projection weights for the query and value components have shapes $\bm{W}_{q}\in\mathbb{R}^{4096\times4096},\bm{W}_{v}\in\mathbb{R}^{4096\times1024}$. The model contains 40 self-attention modules, resulting in $32*2=64$ pre-trained weights that incorporate the LoRA architecture. 

During the learning of the $t$-th new task, O-LoRA updates the parameters $\bm{A}_{t}\in\mathbb{R}^{4096\times r}$ and $\bm{B}_{t}\in\mathbb{R}^{r\times 4096}$ for query and $\bm{A}_{t}\in\mathbb{R}^{1024\times r}$ and $\bm{B}_{t}\in\mathbb{R}^{r\times 4096}$ for value, resulting in $1024*r*32+4096*r*32+r*4096*64=425984r$ trainable parameters. When $r=4$, the number of trainable parameters in O-LoRA is $425984*4=1703936=1.70$M. InfLoRA only updates the parameters $\bm{A}_{t}\in\mathbb{R}^{4096\times r}$ for query and $\bm{A}_{t}\in\mathbb{R}^{1024\times r}$ for value, resulting in $4096*r*32+1024*r*32=163840r$ trainable parameters. When $r=4$, the number of trainable parameters in InfLoRA is $163840*4=655360=0.66$M.

GainLoRA introduces a new gating module $g_{t}(\cdot)$ with parameters $\bm{G}_{t,1}\in\mathbb{R}^{50\times 4096}$, $\bm{G}_{t,2}\in\mathbb{R}^{4096\times 50}$ and $\bm{G}_{t,3}\in\mathbb{R}^{1\times 4096}$. Therefore, the number of trainable parameters in GainLoRA~(O-LoRA) is $1703936+4096*50+4096*50+4096=2117632=2.12$M. The number of trainable parameters in GainLoRA~(InfLoRA) is $655360+4096*50+4096*50+4096=1069056=1.07$M.

\begin{table}[t]
  \caption{FLOPs and MACs for different models. }
  \centering
  \small
  % \scalebox{0.9}{
    % \renewcommand\tabcolsep{1.5pt}
    \begin{tabular}{c|l|cccc}
  \toprule
  & {Method} & Input Shape~(batch,length) & FLOPs~(G) & MACs~(G) \\
  % & \multirow{2}*{{Method}} &   \multicolumn{2}{c|}{Order 1} & \multicolumn{2}{c}{Order 2} \\
  \midrule
  \multirow{4}*{{T5-Large}} 
  % & IncLoRA & 17.07 & -36.56 & 30.73 & -22.88 \\
  % & I$^{2}$LoRA~(IncLoRA) & - &   - & - & - \\
  & Original & (1,128) &  194.25 & 97.1 \\
  & GainLoRA~(O-LoRA) & (1,128) & 198.79 & 99.37 \\
  & GainLoRA~(InfLoRA) & (1,128) & 198.79 & 99.37 \\
  % & GainLoRA~(InfLoRA) & (1,128) & 4.54 & 2.27 \\
  \midrule
  \multirow{4}*{{T5-XL}} 
  & Original & (1,128) &  751.7 & 375.78 \\
  & GainLoRA~(O-LoRA) & (1,128) & 763.03 & 381.45 \\
  & GainLoRA~(InfLoRA) & (1,128) & 763.03 & 381.45 \\
  % & GainLoRA~(InfLoRA) & (1,128) & 11.33 & 5.67 \\
  \midrule
  \multirow{4}*{{Llama-2-7B}} 
  & Original & (1,128) & 1701.07 & 850.5 \\
  & GainLoRA~(O-LoRA) & (1,128) & 1709.14 & 854.53 \\
  & GainLoRA~(InfLoRA) & (1,128) & 1709.14 & 854.53 \\
  % & GainLoRA~(InfLoRA) & (1,128) & 8.07 & 4.03 \\
  \midrule
  \multirow{4}*{{Llama-2-13B}} 
  % & IncLoRA & 36.95 & -19.49 & 43.61 & -14.20 \\
  % & I$^{2}$LoRA~(IncLoRA) & \textbf{54.04} & -3.43 & 50.42 & -6.87 \\
  & Original & (1,128) & 3291.66 & 1645.79 \\
  & GainLoRA~(O-LoRA) & (1,128) & 3304.26 & 1652.09 \\
  & GainLoRA~(InfLoRA) & (1,128) & 3304.26 & 1652.09 \\
  % & GTXLoRA~(InfLoRA) & (1,128) & 12.60 & 6.30 \\
  \midrule
  \multirow{4}*{{Llama-3-8B}} 
  % & IncLoRA & 36.95 & -19.49 & 43.61 & -14.20 \\
  % & I$^{2}$LoRA~(IncLoRA) & \textbf{54.04} & -3.43 & 50.42 & -6.87 \\
  & Original & (1,128) & 1929.86 & 964.89 \\
  & GainLoRA~(O-LoRA) & (1,128) & 1930.49 & 965.21 \\
  & GainLoRA~(InfLoRA) & (1,128) & 1930.49 & 965.21 \\
  % & GTXLoRA~(InfLoRA) & (1,128) & 12.60 & 6.30 \\
  \bottomrule
  \end{tabular}
\label{tbl:conputational}
\vskip -0.1in
\end{table}

\section{More Experimental Results}\label{asec:more experimental results}
\subsection{Discussing Computational Costs Introduced by GainLoRA}\label{sec:com-cost}
Existing methods, such as O-LoRA and InfLoRA, adopt the expandable LoRA architecture shown in Figure~\ref{fig:mblora}~(a) and fix the integration coefficients $\{a_{i}\}_{i=1}^{T}$ to 1, allowing the model to merge the expanded LoRA branches into the pre-trained matrix at inference time, thereby avoiding additional computational costs. However, when using our gating module to integrate different LoRA branches, the LoRA branches cannot be merged into the pre-trained matrix at inference time, which introduces additional computational costs. Nevertheless, we demonstrate that these computational costs are minimal compared to the computational cost of the original large language models (LLMs). 

Table~\ref{tbl:conputational} compares the floating-point operations~(FLOPs) and multiply-add operations~(MACs) during inference for different models with and without GainLoRA. The computation of FLOPs and MACs follows the existing project calflops~\cite{ye2023calflops}. Here, ``Original'' denotes the original LLMs without any LoRA adaptation. Methods such as O-LoRA and InfLoRA avoid additional computational costs by merging their LoRA branches into the original weights during inference, resulting in FLOPs and MACs identical to the original LLMs. Despite introducing additional FLOPs and MACs compared to the original LLMs, GainLoRA maintains minimal computational overhead relative to the original LLMs.

\subsection{Additional Computation Introduced by Subspace Construction}\label{sec:com-svd-cost}
The memory and computational overhead of subspace construction in GainLoRA is minimal due to the small size of the gating module (only 3 layers, see~\ref{sec:more-details-gating}). We provide detailed analyses below.

\textbf{Memory} The number of orthogonal bases stored for each subspace does not exceed its dimension. For T5-Large, the dimensions of the three subspaces are 1024, 100, and 1024, respectively. This results in a worst-case memory of less than 0.3$\%$ of the total model parameters ($(2*1024^2+100^2)$/(T5-Large's params)<0.3$\%$). Similar estimates yield 0.07$\%$, 0.5$\%$, and 0.4$\%$ for T5-XL, Llama-2-7B, Llama-2-13B and Llama-3-8B, respectively. Since this calculation represents a rough upper bound, the actual memory is even lower.

\textbf{Computational Overhead} The computational overhead for subspace construction requires a single forward pass over the task dataset and SVD on the feature matrices of the gating module.

Assuming a single forward pass over the task dataset requires $A$ FLOPs. For T5-Large, training a task for 100 epochs needs 100 forward and backward passes. Since a single backward pass has roughly $2A$ FLOPs, the total FLOPs are $500A$. Thus, a single forward pass for subspace construction accounts for only 1/500=0.2$\%$ of total computation. Similar estimates yield 0.2$\%$, 0.4$\%$, and 0.4$\%$ for T5-XL, Llama-2-7B, and Llama-2-13B, respectively.

GPM requires performing SVD on matrix $H_lH_l^T\in\mathbb{R}^{d_l\times d_l}$, where $H_l$ is the feature matrix in the $l$-th layer of gating module. Based on existing conclusions~\cite{trefethen2022numerical}, the FLOPs for SVD on $HH^T$ is less than $10d_l^3$. For T5-Large ($d_1=d_3=1024$ and $d_2=100$), this results in $10*(2*1024^3+100^3)<30GFLOPs$, which is negligible compared to a single forward pass with sequence length 128 (see Table~\ref{tbl:conputational}). Similar calculations give the same conclusion for T5-XL, Llama-2-7B, Llama-2-13B and Llama-3-8B.

\subsection{Varying the Architecture of Gating Module}\label{sec:vary-gate}
\subsubsection{Varying Function $f(\cdot)$ in Gating Module}\label{sec:vary-func}
To implement our method, we define function $f(\cdot)$ as (\ref{eq:fixed-init}). Here, we vary the formula of function $f(\cdot)$ as the following two functions:
\begin{align}\label{eq:modified}
  {\rm min}\{|b|,1\},~|\sin(\frac{\pi b}{2})|.
\end{align}
Clearly, these two functions map real values among $[0,1]$ and satisfy $f(0)=0$. Table~\ref{atbl:vary-func} shows the results. As we can see, when changing the formula of $f(\cdot)$, GainLoRA also improves the performance of O-LoRA and InfLoRA. 

\begin{table*}[t]
  \caption{Varying the function $f(\cdot)$ in GainLoRA on different task sequences with T5-large model. 
  }
  \centering
  \renewcommand\tabcolsep{3pt}
  \begin{tabular}{l|cc|cc}
  \toprule
  \multirow{2}*{{Method}} &   \multicolumn{2}{c|}{Order~1} & \multicolumn{2}{c}{Order~2} \\
  & AP$\uparrow$ & FT$\downarrow$ & AP$\uparrow$ & FT$\downarrow$ \\
  \midrule
  GainLoRA~(InfLoRA)~$\left(f(b)=|2{\rm sigmoid}(b)-1|\right)$& 46.21 & 2.40  & \textbf{46.44} & 2.61 \\
  % GainLoRA~(InfLoRA)~$\left(f(b)=|2{\rm sigmoid}(b)-1|\right)$& 46.21\tiny{$\pm$0.05} & 2.40\tiny{$\pm$0.24}  & \textbf{46.44\tiny{$\pm$0.41}} & 2.61\tiny{$\pm$0.25} \\
  GainLoRA~(InfLoRA)~$\left(f(b)={\rm min}\{|b|,1\}\right)$   & 45.05 & 2.07 & 45.00 & \textbf{1.74} \\
  % GainLoRA~(InfLoRA)~$\left(f(b)={\rm min}\{|b|,1\}\right)$   & 45.05\tiny{$\pm$0.32} & 2.07\tiny{$\pm$0.24} & 45.00\tiny{$\pm$0.20} & \textbf{1.74\tiny{$\pm$0.44}} \\
  GainLoRA~(InfLoRA)~$\left(f(b)=|\sin(\frac{\pi b}{2})|\right)$   & \textbf{47.48} & \textbf{1.21} & 45.03 & 2.37 \\
  % GainLoRA~(InfLoRA)~$\left(f(b)=|\sin(\frac{\pi b}{2})|\right)$   & \textbf{47.48\tiny{$\pm$0.03}} & \textbf{1.21\tiny{$\pm$0.52}} & 45.03\tiny{$\pm$0.67} & 2.37\tiny{$\pm$0.34} \\
  InfLoRA & 39.78 & 7.64  & 39.57 & 8.93 \\
  % InfLoRA & 39.78\tiny{$\pm$0.57} & 7.64\tiny{$\pm$0.54}  & 39.57\tiny{$\pm$0.94} & 8.93\tiny{$\pm$0.37} \\
  \midrule
  GainLoRA~(O-LoRA)~$\left(f(b)=|2{\rm sigmoid}(b)-1|\right)$ & 47.84 & \textbf{2.26}  & 46.84 & \textbf{2.91} \\
  % GainLoRA~(O-LoRA)~$\left(f(b)=|2{\rm sigmoid}(b)-1|\right)$ & 47.84\tiny{$\pm$0.16} & \textbf{2.26\tiny{$\pm$0.06}}  & 46.84\tiny{$\pm$0.11} & \textbf{2.91\tiny{$\pm$0.13}} \\
  GainLoRA~(O-LoRA)~$\left(f(b)={\rm min}\{|b|,1\}\right)$ & \textbf{49.62} & 2.83 & \textbf{48.62} & 3.74 \\
  % GainLoRA~(O-LoRA)~$\left(f(b)={\rm min}\{|b|,1\}\right)$ & \textbf{49.62\tiny{$\pm$0.57}} & 2.83\tiny{$\pm$0.73} & \textbf{48.62\tiny{$\pm$0.47}} & 3.74\tiny{$\pm$0.02} \\
  GainLoRA~(O-LoRA)~$\left(f(b)=|\sin(\frac{\pi b}{2})|\right)$ & 48.49 & 3.84 & 47.20 & 4.69 \\
  % GainLoRA~(O-LoRA)~$\left(f(b)=|\sin(\frac{\pi b}{2})|\right)$ & 48.49\tiny{$\pm$0.92} & 3.84\tiny{$\pm$0.54} & 47.20\tiny{$\pm$0.85} & 4.69\tiny{$\pm$0.65} \\
  O-LoRA  & 26.37 & 19.15 & 32.83 & 11.99 \\
  % O-LoRA  & 26.37\tiny{$\pm$2.27} & 19.15\tiny{$\pm$2.15} & 32.83\tiny{$\pm$0.25} & 11.99\tiny{$\pm$0.38} \\
  \bottomrule
  \end{tabular}
\label{atbl:vary-func}
\end{table*}

\subsubsection{Varying the Shapes of Weights in Gating Module}
In this section, we vary the shapes of the weights in the gating modules with T5-Large. Specifically, we set the weights $\bm{G}_{i,1}\in\mathbb{R}^{d_{h}\times 1024}$ and $\bm{G}_{i,2}\in\mathbb{R}^{1024\times d_{h}}$ in each gating module $g_{i}(\cdot)$ and vary $d_{h}$ over $\{50, 100, 200\}$. Figure~\ref{fig:hyper}~(a) and Figure~\ref{fig:hyper}~(b) show the results. As we can see, when increasing $d_{h}$, the performance of GainLoRA remains relatively stable, indicating that our method is robust to the shape of the weights in the gating module. Note that the number of trainable parameters increases as $d_{h}$ increases.

\begin{figure}[t]
  \vskip 0.2in
  \begin{center}
  \centerline{\includegraphics[width=\columnwidth]{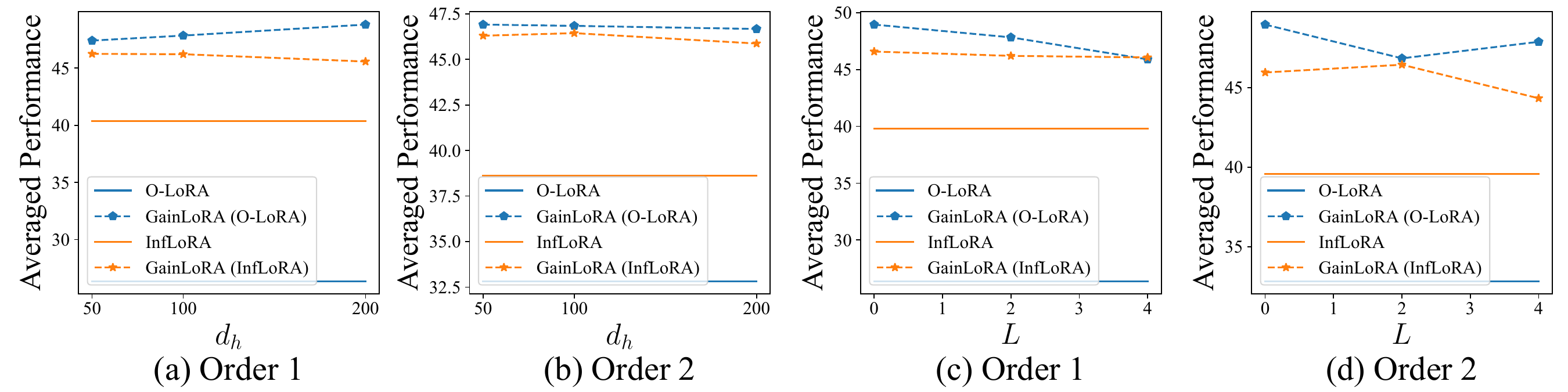}}
\caption{(a) and~(b) show the variation of our methods' performance with the shapes of the weights in the gating module. (c) and~(d) show the variation of our methods' performance with the Layers of the gating module.}
  \label{fig:hyper}
  \end{center}
  \vskip -0.39in
\end{figure}

\subsubsection{Varying the Layers of Gating Module}
In this section, we vary the layers of the gating modules with T5-Large. Specifically, we vary across $\{0, 2, 4\}$. when $L=0$, there is only one layer with $\bm{G}_{i,1}\in\mathcal{R}^{1\times 1024}$ in each gating module $g_{i}(\cdot)$. When $L=2$, there are three layers with $\bm{G}_{i,1}\in\mathcal{R}^{100\times 1024}$, $\bm{G}_{i,2}\in\mathcal{R}^{1024\times 100}$ and $\bm{G}_{i,3}\in\mathcal{R}^{1\times 1024}$. When $L=4$, there are 5 layers with $\bm{G}_{i,1}\in\mathcal{R}^{100\times 1024}$, $\bm{G}_{i,2}\in\mathcal{R}^{1024\times 100}$, $\bm{G}_{i,3}\in\mathcal{R}^{100\times 1024}$, $\bm{G}_{i,4}\in\mathcal{R}^{1024\times 100}$, and $\bm{G}_{i,5}\in\mathcal{R}^{1\times 1024}$ in each gating module. Figure~\ref{fig:hyper}~(c) and Figure~\ref{fig:hyper}~(d) show the results. As we can see, when increasing the layers of gating modules, the performance of GainLoRA remains relatively stable, indicating that our method is robust to the layers of the gating module. Note that the number of trainable parameters increases as the number of layers in gating modules increases.

\begin{figure}[t]
    \vskip 0.2in
    \begin{center}
    \centerline{\includegraphics[width=0.5\columnwidth]{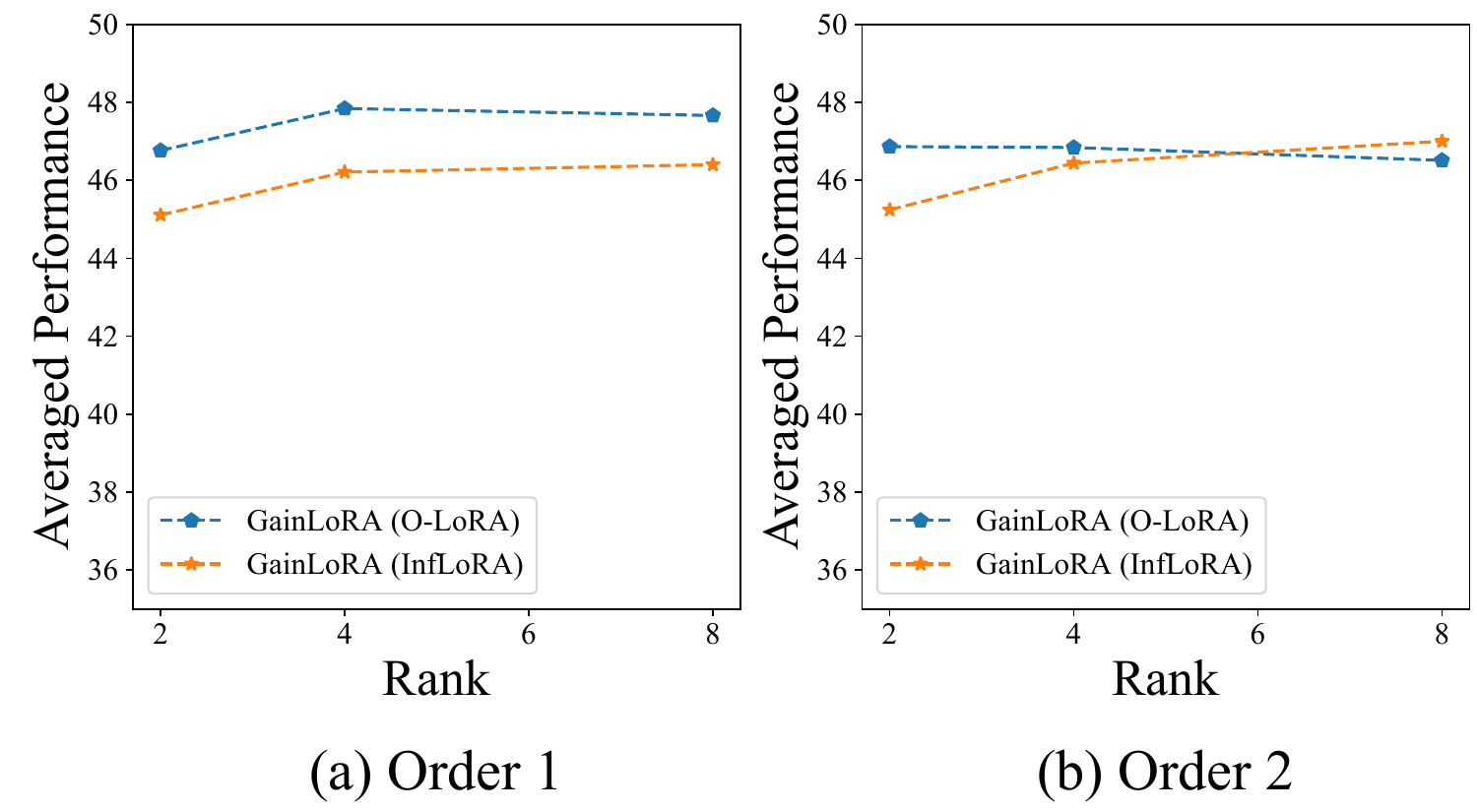}}
    \caption{The variation of our methods' performance with the Layers of the gating module.}
    \label{fig:hyper-layer}
    \end{center}
    \vskip -0.39in
  \end{figure}
  
  \subsection{Varying Ranks in LoRA Branches}\label{sec:vary-rank}
  In this section, we vary the rank of LoRA branches across $\{2,4,8\}$ with T5-Large. Figure~\ref{fig:hyper-layer} shows the results. As shown, when the rank of LoRA branches increases, the performance of GainLoRA remains relatively stable. Note that the number of trainable parameters increases as the rank of LoRA branches increases.
  \subsection{Adopting Other Update Strategies for the New LoRA Branch}\label{sec:combine}
  Our GainLoRA does not impose specific update strategies for the new LoRA branches. In this work, we adopt the same update strategies as the existing two methods, O-LoRA~\cite{wang2023orthogonal} and InfLoRA~\cite{liang2024inflora}. Related methods, such as IncLoRA~\cite{hu2021lora} and C-LoRA~\cite{smithcontinual}, also adopt the expandable LoRA architecture illustrated in Figure~\ref{fig:mblora}~(a) and fix all integration coefficients $\{a_{i}\}_{i=1}^{T}$ to 1. Our method GainLoRA can also adopt their update strategies for the new LoRA branch. Table~\ref{tbl:cmbine} presents the results, demonstrating that GainLoRA further improves the performance of these two methods.
  
  \begin{table*}[t]
    \caption{The overall results on different task sequences with T5-large model.
    }
    \centering
    \scalebox{1.1}{
    \begin{tabular}{l|cccc}
    \toprule
    \multirow{2}*{{Method}} &   \multicolumn{2}{c}{Order 1} & \multicolumn{2}{c}{Order 2} \\
    & AP$\uparrow$ & FT$\downarrow$ & AP$\uparrow$ & FT$\downarrow$ \\
    \midrule
    IncLoRA & 12.33 & 41.93 & 16.65 & 36.56 \\
    % IncLoRA & 12.33\tiny{$\pm$0.56} & 41.93\tiny{$\pm$0.17} & 16.65\tiny{$\pm$0.91} & 36.56\tiny{$\pm$1.30} \\
    GainLoRA~(IncLoRA) & 47.82 & 3.73 & 45.42 & \textbf{5.83} \\
    % GainLoRA~(IncLoRA) & 47.82\tiny{$\pm$0.08} & 3.73\tiny{$\pm$0.25} & \textbf{45.42}\tiny{$\pm$1.19} & \textbf{5.83\tiny{$\pm$1.53}} \\
    C-LoRA  & 22.69 & 24.25 & 32.81 & 11.60 \\
    % C-LoRA  & 22.69\tiny{$\pm$0.01} & 24.25\tiny{$\pm$0.90} & 32.81\tiny{$\pm$0.64} & 11.60\tiny{$\pm$0.23} \\
    GainLoRA~(C-LoRA)  & \textbf{49.24} & \textbf{2.94} & \textbf{46.23} & 6.05 \\
    % GainLoRA~(C-LoRA)  & \textbf{49.24\tiny{$\pm$0.21}} & \textbf{2.94\tiny{$\pm$0.41}} & 46.23\tiny{$\pm$0.61} & 6.05\tiny{$\pm$0.51} \\
    \bottomrule
    \end{tabular}}
  \label{tbl:cmbine}
    \vskip -0.2in
  \end{table*}

\subsection{Performance on the TRACE Benchmark}
To further demonstrate our method's effectiveness, we follow existing CL methods for LLMs~\cite{he2024seekr,wang2023trace} and conduct experiments on the TRACE dataset~\cite{wang2023trace} with Llama-2-7B-Chat. The dataset comprises a diverse set of challenging instruction-tuned tasks, spanning multilingual comprehension, domain-specific knowledge, arithmetic reasoning, and coding. An overview of the tasks in TRACE is provided in Table~\ref{tab:trace-order}.

\begin{table}[h]
  \renewcommand\arraystretch{1.2}
  \renewcommand\tabcolsep{3pt}
  % \small
  \setlength{\belowcaptionskip}{0.1cm}
  \caption{The order of TRACE benchmark for experiments.}
  \label{tab:trace-order}
  \centering
  \begin{tabular}{lc}
    \toprule
    Benchmark & Task Sequence \\
    \midrule  
    \multirow{1}{*}{TRACE Benchmark}& \makecell{C-STANCE $\rightarrow$ FOMC $\rightarrow$ MeetingBank $\rightarrow$ Py150 $\rightarrow$ \\ ScienceQA $\rightarrow$ NumGLUE-cm $\rightarrow$ NUMGLUE-ds $\rightarrow$ 20Minuten } \\
    
    \bottomrule
  \end{tabular}
\end{table}

\begin{table*}[!h]
  \caption{TRACE benchmark performance using LLama-2-7B-Chat.
  }
  \centering
  \begin{tabular}{l|cc}
  \toprule
  % &   \multicolumn{2}{c|}{T5-Large} & \multicolumn{2}{c}{Llama-2-7B} \\
  & AP$\uparrow$ & FT$\downarrow$ \\
  \midrule
  % SAPT-LoRA~\cite{zhao2024sapt} & 51.38 & 0.74 & 55.88 & 0.74 \\
  O-LoRA~\cite{wang2023orthogonal} & 41.04 & 8.05 \\
  GainLoRA~(O-LoRA) & 48.10 & 0.99 \\
  InfLoRA~\cite{liang2024inflora} & 47.67 & 2.25 \\
  % GainLoRA~(InfLoRA) + Replay  & 51.62 & 0.08 & 55.93 & 0.95 \\
  GainLoRA~(InfLoRA)  & 49.15 & 0.89 \\
  \bottomrule
  \end{tabular}
\label{tbl:trace}
  \vskip -0.15in
\end{table*}

\begin{table*}[!h]
  \caption{Comparison of general ability scores across six diverse evaluation tasks between the base LLaMA-2-7B chat model and different methods.
  }
  \centering
  \begin{tabular}{l|cccccc}
  \toprule
  % &   \multicolumn{2}{c|}{T5-Large} & \multicolumn{2}{c}{Llama-2-7B} \\
  & PIQA & MMLU & GSM8K & BBH & BoolQA & TydiQA \\
  \midrule
  % SAPT-LoRA~\cite{zhao2024sapt} & 51.38 & 0.74 & 55.88 & 0.74 \\
  O-LoRA~\cite{wang2023orthogonal} & 72.85 & 32.87 & 13.42 & 35.10 & 56.88 & 19.48 \\
  GainLoRA~(O-LoRA) & 73.61 & 33.33 & 18.57 & 36.47 & 59.69 & 25.00 \\
  InfLoRA~\cite{liang2024inflora} & 74.86 & 40.86 & 15.69 & 35.87 & 65.29 & 27.25 \\
  % GainLoRA~(InfLoRA) + Replay  & 51.62 & 0.08 & 55.93 & 0.95 \\
  GainLoRA~(InfLoRA)  & 75.24 & 44.25 & 21.30 & 37.44 & 68.81 & 27.84 \\
  Llama-2-7B-Chat & 75.35 & 46.13 & 26.54 & 40.09 & 70.46 & 23.45 \\
  \bottomrule
  \end{tabular}
\label{tbl:trace-general}
  \vskip -0.15in
\end{table*}

Table~\ref{tbl:trace} reports the average performance on the TRACE benchmark after sequentially learning all tasks. The results demonstrate that our method effectively mitigates catastrophic forgetting and outperforms existing baselines. This capability is crucial for real-world applications.

\paragraph{Retention of General Capabilities} We also follow existing work~\cite{wang2023trace} to explicitly evaluate the preservation of general abilities, such as instruction-following, after continual learning on the TRACE benchmark using different methods. Table~\ref{tbl:trace-general} indicates that continual learning with different methods often leads to a degradation of general abilities. However, GainLoRA demonstrates a stronger ability to mitigate forgetting compared to other LoRA-based methods, including O-LoRA and InfLoRA.

%%%%%%%%%%%%%%%%%%%%%%%%%%%%%%%%%%%%%%%%%%%%%%%%%%%%%%%%%%%%

\subsection{Compared with More CL Methods in CV}\label{sec:cv-methods}

Following many existing continual learning methods in NLP~\cite{zhao2024sapt,wang2023orthogonal,he2024seekr}, this paper focuses on models based on next-token prediction, such as T5~\cite{raffel2020exploring} and LLaMA~\cite{touvron2023llama2}, which lack the [CLS] token used in ViT. Although many continual learning methods based on pre-trained models in computer vision~\cite{wang2022dualprompt,wang2022learning,wang2022s,wang2023hierarchical,wang2024hide,smith2023coda} cannot be directly applied to our setting, we adapt several of them to the T5 architecture to ensure a comprehensive comparison. Specifically, We implement these methods by injecting prompts into both the keys and values in T5, and introduce zero-padding in the position bias tensor to ensure shape compatibility. Note that we do not add positional information to the prompts, which is consistent with DualPrompt and CODA-Prompt in ViT. The results on order 1 with T5 are reported in Table~\ref{tbl:cv-methods}, showing that these methods perform significantly worse than our GainLoRA and exhibit noticeable forgetting.

\begin{table*}[!h]
  \caption{Compare with different methods on order 1 with T5 architecture.
  }
  \centering
  \begin{tabular}{l|cc}
  \toprule
  % &   \multicolumn{2}{c|}{T5-Large} & \multicolumn{2}{c}{Llama-2-7B} \\
  & AP$\uparrow$ & FT$\downarrow$ \\
  \midrule
  % SAPT-LoRA~\cite{zhao2024sapt} & 51.38 & 0.74 & 55.88 & 0.74 \\
  L2P~\cite{wang2022learning} & 15.23 & 11.34 \\
  DualPrompt~\cite{wang2022dualprompt} & 17.40 & 10.63 \\
  CODA-Prompt~\cite{smith2023coda} & 19.28 & 14.62 \\
  % O-LoRA~\cite{wang2023orthogonal} & 41.04 & 8.05 \\
  GainLoRA~(O-LoRA) & \textbf{47.84} & \textbf{2.26} \\
  % InfLoRA~\cite{liang2024inflora} & 47.67 & 2.25 \\
  % GainLoRA~(InfLoRA) + Replay  & 51.62 & 0.08 & 55.93 & 0.95 \\
  GainLoRA~(InfLoRA)  & 46.21 & 2.40 \\
  \bottomrule
  \end{tabular}
\label{tbl:cv-methods}
  \vskip -0.15in
\end{table*}

% \bibliographystyle{plain}
% \bibliography{neurips_2025}

\end{document}